\newcommand{\figwidth}{0.55\textwidth}
\newcommand{\figwidth}{.8\linewidth}
\renewcommand{\cite}{\citep}
\newcommand{\appref}[1]{Appendix~\ref{#1}}
\Crefname{equation}{Eq.}{Eqs.}
\Crefname{figure}{Fig.}{Figs.}
\theoremstyle{plain}
\newtheorem{theorem}{Theorem}[section]
\newtheorem{lemma}[theorem]{Lemma}
\theoremstyle{definition}
\newtheorem{definition}[theorem]{Definition}
\theoremstyle{remark}
\theoremstyle{definition}
\newtheorem{desideratum}{Desideratum}
\newtheorem{proposition}[theorem]{Proposition}
\newcommand{\tikzxmark}{%
\tikz[scale=0.23] {
    \draw[line width=0.7,line cap=round] (0,0) to [bend left=6] (1,1);
    \draw[line width=0.7,line cap=round] (0.2,0.95) to [bend right=3] (0.8,0.05);
}}
\newcommand{\tikzcmark}{%
\tikz[scale=0.23] {
    \draw[line width=0.7,line cap=round] (0.25,0) to [bend left=10] (1,1);
    \draw[line width=0.8,line cap=round] (0,0.35) to [bend right=1] (0.23,0);
}}
\newcommand{\desbox}[2]{%
    \ifthenelse{\boolean{preprint}}{\vspace{1em}}{}
    \noindent
    \begin{tcolorbox}[
        colback=white,
        colframe=black,
        boxrule=1pt,
        left=0pt,
        right=0pt,
        top=0pt,
        bottom=0pt
    ]
        \begin{desideratum} \label{#1}
            #2
        \end{desideratum}
    \end{tcolorbox}
    \ifthenelse{\boolean{preprint}}{\vspace{.1em}}{}
}
\newcommand{\sR}{\mathbb{R}}
\newcommand{\sD}{\mathbb{D}}
\newcommand{\define}{~\smash{\triangleq}~}
\DeclareMathOperator{\E}{\mathbb{E}}
\title{Gaussian DP for Reporting Differential Privacy Guarantees in Machine Learning}
\date{}
\author[1]{Juan Felipe Gomez}
\author[2]{Bogdan Kulynych}
\author[3]{Georgios Kaissis}
\author[1]{\\Flavio P. Calmon}
\author[3]{Jamie Hayes}
\author[3]{Borja Balle}
\author[4]{Antti Honkela}
\affil[1]{Harvard University}
\affil[2]{Lausanne University Hospital}
\affil[3]{Google DeepMind}
\affil[4]{University of Helsinki}
\title{Gaussian DP for Reporting Differential Privacy Guarantees in Machine Learning}
\author[1]{Juan Felipe Gomez}
\author[2]{Bogdan Kulynych}
\author[3]{Georgios Kaissis}
\author[1]{\\Flavio P. Calmon}
\author[3]{Jamie Hayes}
\author[3]{Borja Balle}
\author[4]{Antti Honkela}
\affil[1]{Harvard University}
\affil[2]{Lausanne University Hospital}
\affil[3]{Google DeepMind}
\affil[4]{University of Helsinki}
\begin{document}

\maketitle

\begin{abstract}
    \noindent
    Current practices for reporting differential privacy (DP) guarantees  for machine learning (ML) algorithms such as DP-SGD provide an incomplete and potentially misleading picture. For instance, if only a single $(\varepsilon, \delta)$ is known about a mechanism, standard analyses show that there could exist highly accurate inference attacks against training data records, when, upon a more careful analysis, such accurate attacks do not exist for most practical mechanisms. In this position paper, we argue that using \emph{non-asymptotic} Gaussian Differential Privacy (GDP)  as the primary means of communicating DP guarantees in ML avoids these potential downsides. Using two recent developments in the DP literature: (i) open-source numerical accountants capable of computing the privacy profile and $f$-DP curves of DP-SGD to arbitrary accuracy, and (ii) a decision-theoretic metric over DP representations, we show how to provide non-asymptotic bounds on GDP using numerical accountants, and show that GDP can capture the entire privacy profile of DP-SGD and related algorithms with virtually no error, as quantified by the metric. To support our claims, we investigate the privacy profiles of state-of-the-art DP large-scale image classification, and the TopDown algorithm for the U.S. Decennial Census, observing that GDP fits their profiles remarkably well in all cases. We conclude with a discussion on the strengths and weaknesses of this approach, and discuss which other privacy mechanisms could benefit from GDP.
\end{abstract}

\section{Introduction}
\label{sec:intro}

\begingroup
\renewcommand\thefootnote{}
\footnotetext{This work has been accepted for publication at the IEEE Conference on Secure and Trustworthy Machine Learning (SaTML). The final version will be available on IEEE Xplore.}
\endgroup

\definecolor{bad}{HTML}{CC0000}
\begin{figure*}[t]
    \centering
    \subfigure[Laplace]{
        \centering
        \includegraphics[width=0.266\linewidth]{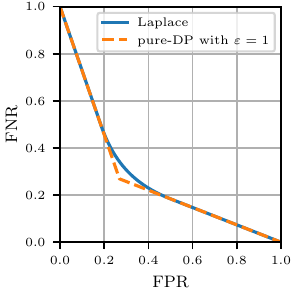}
        \label{fig:laplace}}
    \subfigure[DP-SGD]{
        \centering

        \includegraphics[width=0.28\linewidth]{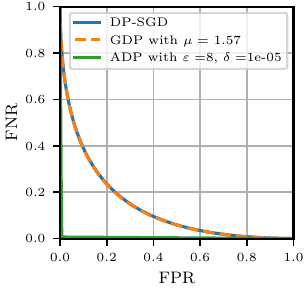}
        \label{fig:de}}
    \subfigure[Regret, \%]{
        \centering
        \raisebox{22mm}{
        \resizebox{0.28\linewidth}{!}{
            \begin{tabular}{lrr}
            & Laplace & DP-SGD \\
            \midrule
            \multicolumn{1}{l}{Concise} \\
            \midrule
            
            $\varepsilon$-DP & \textbf{3.43} & -- \\
            $(\varepsilon, \delta)$-DP & \textbf{3.42} & 21.70 \\
            $\rho$-zCDP & 4.83 & 21.80 \\
            $\mu$-GDP & 3.70 & \textbf{0.10} \\
            
            \midrule
            \multicolumn{1}{l}{Non-concise} \\
            \midrule
            
            $(t, \varepsilon(t))$-RDP & 3.38 & 20.80 \\
            $(\varepsilon, \delta(\varepsilon))$ or $f$ & 0.00 & 0.00 \\
            \end{tabular}
        
        }
        \label{fig:metric_table}}}
     \caption{\small{\emph{Left}: Comparison between the Laplace trade-off curve ($b=1$) and the DP trade-off curve with $\varepsilon = 1$. Higher means more private, hence the pure-DP guarantee is a valid and visually tight bound for Laplace mechanism.  \emph{Middle}: Comparison between a DP-SGD trade-off curve ($\sigma = 9.4, T = \num{2000}, q = 0.33$) from \citet{de2022unlockinghighaccuracydifferentiallyprivate} and a GDP guarantee. This shows that \emph{the GDP bound is tighter for DP-SGD than the $\varepsilon$-DP bound is for Laplace}. \emph{Right}: we quantify the regret from using the DP parameterization over the exact trade-off curve (a measure of ``goodness-of-fit''). Lower means more accurate. We fix $\delta = 10^{-5}$. Although GDP is not universally the best representation (it is not the most accurate for Laplace), GDP is the most accurate concise representation for DP-SGD. We provide technical details in \appref{app:tech_details}.
     }}
    \label{fig:fig1}
\end{figure*}

Ensuring data privacy in machine learning (ML) workflows is crucial, particularly as models trained on sensitive data are increasingly deployed and shared. 
Differential Privacy (DP)~\cite{dwork2006calibrating} has emerged as the gold standard for privacy-preserving ML, offering provable  
guarantees against a broad class of privacy attacks~\cite{salem2023sok}. In principle, any model trained using a DP \emph{mechanism} comes with a formal bound on the amount of information that can be learned about individual training records, regardless of the adversary’s auxiliary knowledge or computational power. 
In the standard variant known as approximate DP (ADP), the strength of the guarantee is controlled by a \emph{privacy budget} parameter $\varepsilon$ and a constant $\delta$. Conventions for setting $\delta$ vary, but it is often set to $ \nicefrac{1}{N^c}$ for $c > 1$, where $N$ is the dataset size~\cite{ponomareva2023how}, or set to be cryptographically small~\cite{vadhan2017complexity}. 

The canonical algorithm for training private deep learning models is DP-SGD \cite{abadi2016deep}, which adds noise to clipped per-example gradients during stochastic optimization. Thanks to its simplicity, DP-SGD is widely adopted and forms the backbone of nearly all state-of-the-art private ML pipelines, including for image classification \cite{de2022unlockinghighaccuracydifferentiallyprivate}, 
and large language model (LLM) fine-tuning~\cite{chua2024mindprivacyunituserlevel, yu2022differentially, lin2023differentially}. The actual privacy protection conferred by DP-SGD 
is most accurately captured by a \emph{privacy profile} $\delta(\varepsilon)$~\cite{balle2018privacy,koskela20b}, i.e., a collection of ADP guarantees. An equivalent and more interpretable view of privacy profiles is given by the \emph{trade-off function} in $f$-DP \cite{dong2022gaussian}, which characterizes the achievable false positive and false negative rates of a worst-case membership inference attack (MIA) aiming to determine whether a specific sample was part of the training dataset. To this end, the past decade has seen significant progress in analyzing the privacy properties of DP-SGD, and led to the development of powerful numerical accountants \cite{koskela2021computingdifferentialprivacyguarantees, gopi2021numerical, alghamdi2023saddle, doroshenko2022connectdotstighterdiscrete} that can compute the entire privacy profile or trade-off function for complex DP workflows. 

Ideally, when reporting the privacy guarantees of DP algorithms such as DP-SGD, we want to report the entire privacy profile or the trade-off function, as it paints a complete picture of the algorithm's privacy guarantees. As this is impractical, the DP community has continued to report DP guarantees using a single $(\varepsilon, \delta)$ pair. This choice necessarily has downsides. Most notably, a single $(\varepsilon, \delta)$-DP pair provides particularly pessimistic bounds when converted into interpretable bounds on attack risk~\cite{kulynych2024attack}. 

Recent research on membership inference attacks (MIAs)~\cite{rezaei2021difficulty,carlini2022membership} focuses on bounding true positive rate ($\textsc{tpr} = 1 - \textsc{fnr}$) at low false positive rate ($\textsc{fpr}$), as this limits the adversary's ability to confidently detect any data record's membership.
In \cref{fig:de} we illustrate the guarantees of a DP-SGD instance (blue line) which ensures that the true positive rate of an inference attack at a false positive rate of $10\%$ is at most $61\%$.
If we only knew the respective $(\varepsilon, \delta)$-DP guarantee at $\delta = 10^{-5}$ (green line), it would appear as if the true positive rate were bounded by $99.95\%$, turning the guarantee into an almost meaningless one.
We provide a more detailed visual representation of the MIA bounds in the low \textsc{fpr} regime in this case in \appref{app:gaussian_app}.

Moreover, $\varepsilon$ values are incomparable if they are computed at different $\delta$. For instance, a realistic mechanism with $\varepsilon = 8$ at $\delta = 10^{-9}$ can be more private in every aspect than a mechanism with $\varepsilon = 6$ at $\delta = 10^{-5}$ (see \cref{tab:epsilon_mu}). As standard conventions set $\delta$ as a function of the dataset size, incomparability is likely across different settings. This problem can be avoided by using the entire privacy profiles of the compared mechanisms~\cite{kaissis2023bounding}. 
These issues demonstrate the need for more sophisticated privacy reporting that uses more information from the privacy profile.

We postulate that a useful method for reporting privacy guarantees in privacy-preserving ML needs to adhere to three desiderata: (1) it should consist of one or two scalar parameters like $(\varepsilon, \delta)$, with one of the parameters having the semantics of a privacy budget like $\varepsilon$, (2) we should be able to compare mechanisms by the budget parameter, and (3) the parameters should accurately represent privacy guarantees for practical mechanisms such as DP-SGD. To understand which DP representations satisfy these requirements, we limit ourselves to common concise parameterizations that satisfy the desiderata (1) and (2). These are ADP (if we assume a fixed $\delta$), zero-concentrated DP (zCDP)~\cite{dwork2016concentrated,bun2016concentrated}, and Gaussian DP~\cite{dong2022gaussian}. To quantify their adherence to (3), we re-purpose a recent decision-theoretic metric between DP mechanisms~\cite{kaissis2024beyond} to measure \emph{regret} of using a given privacy representation instead of the complete privacy profile or the trade-off function, and empirically evaluate their fit in practical deployments.

This comparison is challenging as (a) DP-SGD does not admit simple analyses in terms of zCDP, and (b) the standard analyses of DP-SGD in terms of GDP are asymptotic, which results in \emph{optimistic}\footnote{DP reports ``worst-case'' pessimistic bounds on privacy loss. While optimistic ``best-case'' bounds are interesting, they are not the focus of DP or this work.}
estimates of privacy loss~\cite{gopi2021numerical}. To address (a), we use a numeric approach to find the optimal zCDP guarantee from a set of R\'enyi DP guarantees~\cite{mironov2017renyi} obtained using the standard moments accounting procedure~\cite{abadi2016deep,mironov2017renyi}. For (b), we propose a new way to obtain a \emph{pessimistic} bound on GDP based on numerical accounting. This enables us to compare these representations on equal terms.

Empirically, we find that various practical deployments of DP machine learning algorithms are almost exactly characterized by a pessimistic, non-asymptotic $\mu$-GDP guarantee. In particular, we observe this behaviour for DP 
large-scale image classification models~\cite{de2022unlockinghighaccuracydifferentiallyprivate} and, beyond ML, the TopDown algorithm for the U.S.\@ Decennial Census~\cite{abowd2022census}. 

As an illustration, in \cref{fig:fig1} we show that a pessimistic, non-asymptotic GDP guarantee characterizes the behavior of DP-SGD \emph{more precisely} than $\varepsilon$-DP characterizes the privacy guarantees of the standard Laplace mechanism. This can be seen in \cref{fig:fig1}(c), where the regret from using $\varepsilon$-DP to represent the trade-off curve of the Laplace mechanism is $3.43\%$, and the regret from using $\mu$-GDP to represent the trade-off curve of DP-SGD is $0.1\%$. This  can also be seen in \cref{fig:fig1}(a,b) where the dotted orange line is a tighter lower-bound to the blue for DP-SGD compared to Laplace. Thus, GDP satisfies all the desiderata for a useful privacy parameterization for many realistic cases.

Based on these observations, \textbf{we call the DP community to move beyond $\varepsilon$ at fixed $\delta$ as the standard for reporting privacy guarantees for algorithms that admit tight analyses in terms of the privacy profile or the trade-off curve, such as DP-SGD. Instead, we propose converting the privacy profile to a \emph{pessimistic, non-asymptotic}, $\mu$-GDP guarantee. We further propose to optionally test whether it provides an accurate representation using the decision-theoretic regret metric, and treating $\mu$-GDP as complete privacy representation if the test passes.} 
When GDP is a ``good fit'' according to the regret metric---which is the case for many realistic instances in privacy-preserving ML---it offers a concise single-parameter yet practically complete representation of privacy guarantees, enabling comparability across settings and precise characterizations of attack risk. In the paper, we provide a method for obtaining such a $\mu$-GDP guarantee using accountants, and a method to test if the GDP guarantee is accurate. 
A Python package which enables to perform these steps is available at:
\begin{center}
    \url{https://github.com/interpretable-dp/gdpnum}
\end{center}

\section{Technical Background and Tools}
\label{sec:preliminaries}
In this section, we overview the background and tools needed to understand our position. This section was written for readers with technical familiarity with DP terminology.
A more detailed overview can be found in \appref{app:detailed}.
Let $S \in \sD^N$ denote a dataset with $N$ individuals over a data record space $\sD$. We use $S \simeq S'$ to denote when two datasets are neighbouring under an (arbitrary) neighbouring relation. Let $M$ denote a randomized algorithm (or mechanism) that maps datasets to probability distributions over some output space. Let $\Theta$ denote the output space, and a specific output as $\theta \in \Theta$. In a slight abuse of notation, we use $M(S)$ to denote both the probability distribution over $\Theta$ and the underlying random variable.

\subsection{Classical Differential Privacy} 
\begin{definition}[\citealp{dwork2006calibrating,dwork2014algorithmic}]
A mechanism $M: \sD^N \rightarrow \Theta$ satisfies $(\varepsilon, \delta)$-DP if for any measurable $E \subseteq \Theta$ and $S \simeq S'$, we have
$ \Pr[M(S) \in E] \leq e^\varepsilon \Pr[M(S') \in E] + \delta$. We say that the mechanism satisfies \emph{pure DP} if $\delta = 0$ and \emph{approximate DP} (ADP) otherwise.
\end{definition}
Most DP algorithms satisfy a continuum of approximate DP guarantees, hence we say that a mechanism $M$ has a privacy profile $\delta(\varepsilon)$ if for every $\varepsilon \in \mathbb{R}$, it is $(\varepsilon, \delta(\varepsilon))$-DP.

\subsection{DP-SGD}\label{app:dpsgd}
DP-SGD~\cite{song2013stochastic,bassily2014differentially,abadi2016deep} is a differentially private adaptation of the Stochastic Gradient Descent (SGD). A core building block of DP-SGD is the \emph{subsampled Gaussian mechanism}:
\begin{equation}
    M_p(S) = g(\mathsf{Subsample}_p(S)) + Z,
\end{equation}
where $Z \sim \mathcal{N}(0, \sigma^2)$, and $\mathsf{Subsample}_p(S)$ denotes \emph{Poisson} subsampling of the dataset $S$, which includes any element of $S$ into the subsample with probability $p \in [0, 1]$. 

DP-SGD is an iterative algorithm which applies subsampled Gaussian mechanism to the whole training dataset multiple times, where each time the query function $g(\cdot)$ corresponds to computing the loss gradient on the Poisson-subsampled batch of training data examples, and clipping the per-example gradients to ensure their bounded $L_2$ norm. DP guarantees depend on the the sampling rate $q = B/N$ (where $B$ is the expected batch size under Poisson sampling and $N$ is the dataset size), the number of iterations $T$, and the noise parameter $\sigma$.

\subsection{Differential Privacy Variants}
We also consider DP variants based on R\'enyi divergence.
\begin{definition}[\citealp{mironov2017renyi,bun2016concentrated}]
A mechanism $M(\cdot)$ satisfies $(t, \varepsilon(t))$-RDP if for all $S\simeq S'$ the R\'enyi divergence of order $t$ from $M(S)$ to $M(S')$ is bounded by $\varepsilon(t)$. See \appref{app:rdp} for the definition of the R\'enyi divergence. 
The mechanism satisfies $\rho$-zCDP if it satisfies $(t,\rho \: t)$-RDP for all $t \geq1 $ given $\rho \geq 0$. 
\end{definition}

DP can be equivalently characterized via a constraint on the success rate of a hypothesis test~\cite{wasserman2010statistical, kairouz2015composition, dong2022gaussian}. Given datasets $S \simeq S'$ and mechanism $M$, an adversary aims to determine if a given output $\theta \in \Theta$ came from $M(S)$ or $M(S')$ 
via running a binary hypothesis test $H_0: \theta \sim M(S), \quad H_1: \theta \sim M(S')$,
where the test is modelled as a test function $\phi: \Theta \rightarrow [0,1]$ which associates a given output $\theta$ to the probability of the null hypothesis $H_0$ being rejected.

We can analyze this hypothesis test in terms of the trade-off between the attainable \emph{false positive rates} (FPR) $\alpha_\phi \define \smash{\E_{\theta \sim M(S)}}[\phi(\theta)]$ and \emph{false negative rates} (FNR) $\beta_\phi \define 1 - \smash{\E_{\theta \sim M(S')}}[\phi(\theta)]$. 
This can be done via the \emph{trade-off curve}, a function that outputs the lowest achievable FNR at any given FPR $\alpha$: $T(M(S), M(S'))(\alpha) \define \inf_{\phi:~\Theta \rightarrow [0, 1]}\{ \beta_{\phi} \mid \alpha_\phi \leq \alpha \}$. This trade-off curve forms the basis of a more general version of DP called $f$-DP.

\begin{definition}[\citealp{dong2022gaussian}]
A mechanism $M$ satisfies $f$-DP if for any $S \simeq S'$ and $\alpha \in [0,1]$, we have that $T(M(S), M(S'))(\alpha) \geq f(\alpha)$.
Note that a \emph{valid} trade-off curve $f: [0, 1] \rightarrow [0, 1]$ must be non-increasing, convex, and upper bounded as $f(\alpha) \leq 1 - \alpha$.
\end{definition}

The $f$-DP notion is more general than DP: a mechanism $M$ is $(\varepsilon, \delta)$-DP iff it satisfies $f$-DP with:
\begin{equation}\label{eq:dp-to-f}
    f_{\varepsilon, \delta}(\alpha) = \max\{0, 1 - \delta - e^\varepsilon \alpha,\ e^{-\varepsilon} (1 - \delta - \alpha)\}.
\end{equation}
Similarly to \cref{eq:dp-to-f}, other representations such as R\'enyi DP and zCDP \emph{induce} a trade-off curve, that we call the \emph{associated trade-off curve} of a representation (see \appref{app:trade-off-curves-repr} for details). 

Moreover, it turns out that an $f$-DP trade-off curve is equivalent to a privacy profile:
\begin{theorem}[\citealp{dong2022gaussian}]\label{cor:profile_to_f}
    A mechanism $M$ satisfies $(\varepsilon, \delta(\varepsilon))$-DP iff it is $f$-DP with:
    \begin{equation}
        f(\alpha) = \sup_{\varepsilon \in \mathbb{R}} \max \{0, 1 - \delta(\varepsilon) - e^\varepsilon \alpha, e^{-\varepsilon}(1 - \delta(\varepsilon) - \alpha) \}.
    \end{equation}
\end{theorem}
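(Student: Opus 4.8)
The plan is to obtain \cref{cor:profile_to_f} as a direct lifting of the fixed-$\varepsilon$ equivalence already recorded at \cref{eq:dp-to-f}: a mechanism $M$ is $(\varepsilon,\delta)$-DP if and only if it is $f_{\varepsilon,\delta}$-DP, where $f_{\varepsilon,\delta}(\alpha)=\max\{0,\,1-\delta-e^{\varepsilon}\alpha,\,e^{-\varepsilon}(1-\delta-\alpha)\}$. The only additional ingredient is the elementary fact that requiring $f$-DP for a whole \emph{family} of trade-off curves is the same as requiring it for their pointwise supremum. Indeed, $f$-DP asks that $T(M(S),M(S'))(\alpha)\ge f(\alpha)$ for every neighbouring pair $S\simeq S'$ and every $\alpha$; holding this for $f_{\varepsilon,\delta(\varepsilon)}$ simultaneously across all $\varepsilon$ is, separately at each $\alpha$, nothing but $T(M(S),M(S'))(\alpha)\ge \sup_{\varepsilon\in\sR} f_{\varepsilon,\delta(\varepsilon)}(\alpha)$, which is exactly the $f$ in the statement.

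For the forward implication I would assume $M$ is $(\varepsilon,\delta(\varepsilon))$-DP for every $\varepsilon\in\sR$. Fixing an arbitrary pair $S\simeq S'$, \cref{eq:dp-to-f} applied at each $\varepsilon$ yields $T(M(S),M(S'))(\alpha)\ge f_{\varepsilon,\delta(\varepsilon)}(\alpha)$ for all $\alpha$, and taking the supremum over $\varepsilon$ on the right preserves the inequality, giving $T(M(S),M(S'))(\alpha)\ge f(\alpha)$. Since the pair was arbitrary, $M$ is $f$-DP. The converse runs the same chain backwards: if $M$ is $f$-DP, then for any fixed $\varepsilon_0$ and any pair we have $T(M(S),M(S'))(\alpha)\ge f(\alpha)\ge f_{\varepsilon_0,\delta(\varepsilon_0)}(\alpha)$, because a supremum dominates each of its members; hence $M$ is $f_{\varepsilon_0,\delta(\varepsilon_0)}$-DP, and \cref{eq:dp-to-f} converts this back into $(\varepsilon_0,\delta(\varepsilon_0))$-DP. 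As $\varepsilon_0$ was arbitrary, $M$ realises the entire profile.

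The only points needing care are the well-posedness of the right-hand side and the legitimacy of what looks like a quantifier interchange. Each $f_{\varepsilon,\delta}$ is a maximum of affine functions, hence convex and non-increasing, and both properties survive pointwise suprema, so $f$ is convex and non-increasing; the bound $f(\alpha)\le 1-\alpha$ is then inherited for free, since under either hypothesis $f$ is dominated by a genuine trade-off curve $T(M(S),M(S'))$, which always satisfies $T(\alpha)\le 1-\alpha$. The apparent quantifier interchange is not really one: $\sup_{\varepsilon} f_{\varepsilon,\delta(\varepsilon)}(\alpha)$ is evaluated pointwise in $\alpha$, and ``$T(\alpha)\ge g_\varepsilon(\alpha)$ for all $\varepsilon$'' is by definition equivalent to ``$T(\alpha)\ge\sup_\varepsilon g_\varepsilon(\alpha)$,'' so no exchange of limits and no measurability concern arises. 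I expect no real obstacle beyond this bookkeeping: all the analytic substance --- the Neyman--Pearson / supporting-hyperplane characterization relating the $(\varepsilon,\delta)$ inequalities to the two linear pieces of $f_{\varepsilon,\delta}$ --- is already packaged inside \cref{eq:dp-to-f}, and the present statement is simply its promotion from a single $(\varepsilon,\delta)$ to the upper envelope over the whole profile.
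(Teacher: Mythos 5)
Your proposal is correct; the paper itself states this result without proof (citing \citealp{dong2022gaussian}), and your argument---promoting the fixed-$(\varepsilon,\delta)$ equivalence of \cref{eq:dp-to-f} to the upper envelope $\sup_{\varepsilon} f_{\varepsilon,\delta(\varepsilon)}$ via the pointwise-supremum observation---is exactly the standard derivation packaged in the cited source, with the Neyman--Pearson content confined to the single-$(\varepsilon,\delta)$ case. Your care about validity of the envelope (convexity and monotonicity surviving suprema, and $f(\alpha)\leq 1-\alpha$ inherited from domination by a genuine trade-off curve under either hypothesis) is sound and handles the only delicate point.
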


In practice, the privacy profiles for complex algorithms such as DP-SGD, which involve composition, are computed numerically via algorithms called accountants~\cite[see, e.g.,][]{abadi2016deep,koskela2021computingdifferentialprivacyguarantees,gopi2021numerical,doroshenko2022connectdotstighterdiscrete}. These algorithms compute profiles to accuracy nearly matching the lower bound of a privacy audit where the adversary is free to choose the entire (often pathological) training dataset \cite{nasr2021adversary,nasr2023tight}. Given these results, we can treat the analyses of numerical accountants as exact up to floating-point precision. \cref{cor:profile_to_f} implies that privacy curves $\delta(\varepsilon)$ from numerical accountants can be transformed into trade-off functions, and there exist efficient and practical algorithms for performing such conversions~\cite{kulynych2024attack}.

\subsection{Gaussian Differential Privacy Beyond Asymptotics}
Gaussian Differential Privacy (GDP) is a special case of $f$-DP where the bounding function $f$ is defined by a test to distinguish a single draw from a unit variance Gaussian with zero mean versus one from a unit variance Gaussian with mean $\mu$. The resulting trade-off curve is:
\begin{definition}[\citealp{dong2022gaussian}]
    A mechanism $M$ satisfies $\mu$-GDP iff it is $f_\mu$-DP with:
    \begin{equation}\label{eq:gdp}
        f_\mu(\alpha) = \Phi(\Phi^{-1}(1- \alpha) - \mu),
    \end{equation}
    where $\Phi$ denotes the CDF and $\Phi^{-1}$ the quantile function of the standard normal distribution. 
\end{definition}
The parameter $\mu$ is similar to $\varepsilon$ in standard DP in the sense that it quantifies privacy loss: higher values of $\mu$ correspond to less private algorithms. Although previous work \cite{dong2022gaussian, Bu2020Deep} focused on deriving asymptotic $\mu$-GDP guarantees for ML algorithms such as DP-SGD, in this work we take advantage of the fact that non-asymptotic numerically precise trade-off curves are readily available to compute optimally tight GDP guarantees. Given a mechanism with a trade-off curve $f$, we seek the smallest possible $\mu$ such that the mechanism is $\mu$-GDP. Specifically, we wish to find:
\begin{equation}
    \mu^* = \inf\{\mu \geq 0 ~\mid~ \forall \alpha \in [0, 1]: \: f_{\mu}(\alpha) \leq f(\alpha)\}.\label{eq:mu}
\end{equation}
A similar expression was used by \citet{koskela2023individual}, albeit under a different context. This $\mu^*$ parameter is tight in the sense that there is no $\mu' < \mu^*$ such that the mechanism is $\mu'$-GDP. It turns out that \cref{eq:mu} is particularity simple to solve due to the piecewise-linear structure of trade-off curves generated by numerical accountants. We leave the technical details to the appendix: \appref{app:profiles} discusses accountants in detail and \appref{app:gdp_details} shows how to solve \cref{eq:mu}. We remark that for the numerical accountants used in this work~\cite{doroshenko2022connectdotstighterdiscrete}, we can solve \cref{eq:mu} in microseconds on commodity hardware. Hence, it is easy to take a trade-off curve $f$ from a numerical accountant and convert it to a tight $\mu$-GDP guarantee.

\subsection{Representation Regret: A Metric Over Trade-Off Curves}\label{sec:metric}

The last concept we need is a recently proposed metric over DP mechanisms, which can be equivalently interpreted as a metric over privacy guarantee representations. This metric is based on the hypothesis testing interpretation of DP, and is defined via trade-off functions. 
\begin{definition}[\citealp{kaissis2024beyond}]\label{def:metric}
Given two valid trade-off functions $f, \tilde{f}$, the $\Delta$-divergence from $f$ to $\tilde f$ is:
\begin{equation}\label{eq:delta-div}
    \Delta(f, \tilde{f}) \define \inf \{ \kappa \geq 0 ~\mid~ \forall \alpha \in [0, 1]: f(\alpha + \kappa) - \kappa \leq \tilde{f}(\alpha)\}.
\end{equation}
Moreover, the symmetrized $\Delta$-divergence is a metric over trade-off curves and is defined as:
\begin{equation}\label{eq:symm-delta-div}
    \Delta^{\leftrightarrow}(f, \tilde f) \define \max\{\Delta(f, \tilde f), \Delta(\tilde f, f)\}.
\end{equation}
\end{definition}
Due to a classical result by \citet{blackwell,dong2022gaussian}, we know that if $f(\alpha) \leq \tilde f(\alpha)$ for all $\alpha \in [0,1]$, then 
$\smash{\tilde f}$ is uniformly more private than $f$.
Intuitively, $\Delta(f, \smash{\tilde f})$ quantifies how far down and left one needs to shift $f$ so that $\smash{\tilde f}$ is uniformly more private. 
If $\Delta(f, \smash{\tilde f})$ is small, this implies that $f, \smash{\tilde f}$ are close. 
In our context, $\smash{\tilde f}$ corresponds to the trade-off curve associated with a pessimistic DP guarantee such as $(\varepsilon, \delta)$ or $\mu$-GDP, and $f$ corresponds to the exact trade-off curve of a mechanism. We hence refer to $\Delta(f, \smash{\tilde f})$ as the \emph{regret} of reporting the pessimistic bound $\smash{\tilde f}$ over the exact numerical trade-off curve $f$. See \cref{fig:regret} for an illustration.

\begin{figure}
    \centering
    \includegraphics[width=\figwidth]{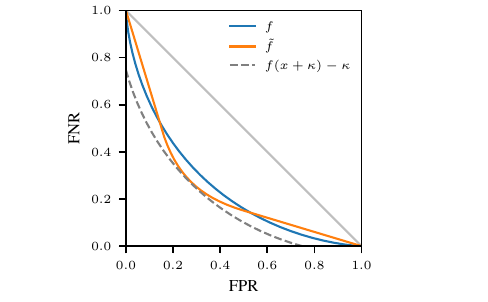}
    \caption{Illustration of the \citet{kaissis2024beyond} regret metric between two mechanisms which satisfy $f$-DP and $\tilde f$-DP, respectively. The metric $\Delta(f, \tilde f)$ is the smallest $\kappa \geq 0$ such that $f(\alpha + \kappa) - \kappa$ dominates $\tilde f$. }
    \label{fig:regret}
\end{figure}

Similar to \cref{eq:mu}, the structure of the trade-off curves from numerical accountants make computing $\Delta(f, \smash{\tilde f})$ practical and easy to implement, and can be done in milliseconds on commodity hardware. We leave the details of the numerics to \appref{app:gdp_details}. 

In \cref{sec:proposed-framework} we additionally provide an operational interpretation of the values of regret in terms of risk of standard attacks against data privacy.

\section{Desiderata for Reporting Privacy}
\label{sec:desiderata}

Building on the tools detailed in \cref{sec:preliminaries}, we argue that the DP community is well-positioned to rethink its conventional methods for reporting privacy guarantees in machine learning and especially DP-SGD. 
With the development of numerical accountants \cite{koskela2021computingdifferentialprivacyguarantees, gopi2021numerical, alghamdi2023saddle, doroshenko2022connectdotstighterdiscrete} capable of computing trade-off curves to arbitrary accuracy \cite{kulynych2024attack}, and the introduction of metrics for quantifying the distance between two trade-off curves \cite{kaissis2024beyond}, the tools today far exceed those present when the current standards (reporting $\varepsilon$ at sufficiently small $\delta$) were established. 
In this section, we identify key criteria that any effective reporting standard should satisfy. We then show the limitations of the current approaches and present a more robust alternative. \cref{tab:desiderata} provides a summary.

\begin{table}[tb]
    \caption{Comparison of DP variants and their match to desiderata.}
    \begin{tabularx}{\columnwidth}{Xccc}
           & \scriptsize D1 & \scriptsize D2 & \scriptsize D3 \\
    Method & Concise & Ordered & Accurate \\
    \midrule
    $\varepsilon$-DP & \tikzcmark & \tikzcmark & \tikzxmark \\
    $(\varepsilon, \delta)$-DP & \tikzcmark & \tikzxmark & \tikzxmark \\
    $ (\varepsilon, \delta(\varepsilon))$ or $f$ & \tikzxmark & \tikzxmark & \tikzcmark \\
    $(t, \varepsilon(t))$-RDP & \tikzxmark & \tikzxmark & \tikzxmark \\
    $\rho$-zCDP & \tikzcmark & \tikzcmark & \tikzxmark \\
    $\mu$-GDP & \tikzcmark & \tikzcmark & \tikzcmark
    \end{tabularx}
    \label{tab:desiderata}
    \vspace{-1em}
\end{table}

\vspace{.5em}
\desbox{des:1}{
        Concise (one- or two-parameter) representation of privacy guarantees, 
        with one of the parameters having the interpretation of a ``privacy budget''.
}
\noindent This is a common goal in practice. For example, pure DP \cite{dwork2006calibrating} provides a single, clear, and worst-case bound $\varepsilon$ on how much any individual’s data can influence an output of a mechanism. Moreover, the $\varepsilon$ parameter increases under composition, which motivated the concept of a \emph{privacy budget} being expended. This intuition is preserved with other privacy definitions such as zCDP and GDP, the parameters of which also increase under composition, though it is worth mentioning that optimal composition for certain representations (e.g. zCDP, GDP) are much simpler than that in pure and approximate DP~\cite{kairouz2015composition}. As a result, these guarantees are not only easy to interpret but also straightforward to report and manage. A profile approach, where one reports either the full privacy profile $\delta(\varepsilon)$, trade-off curve $f$, or RDP curve $\varepsilon(t)$, does not satisfy this property.

\vspace{.5em}
\desbox{des:2}{
    The strength of privacy guarantees can be ordered based on the ordering of the parameters.
}
\noindent Let $\gamma, \gamma' \in \mathbb{R}$ denote privacy budget parameters for two mechanisms $M, M'$. Desideratum 2 says that if $\gamma \leq \gamma'$, then $M$ is more private than $M'$. 

Although there exist different approaches to compare privacy-preserving mechanisms
~\cite[see, e.g.,][]{chatzikokolakis2019comparing}, we use the recent approach by \citet{kaissis2024beyond} which establishes the equivalence between comparing mechanisms by their trade-off curve or privacy profile and the standard statistical notion of experiment comparison known as the Blackwell order~\cite{blackwell}. According to this approach, \cref{des:2} holds for the single-parameter definitions. In general, it does not hold for the two-parameter families---approximate DP and RDP---as it is possible to choose $(\varepsilon, \delta)$, $(\varepsilon', \delta')$ such that mechanism $M$ is neither uniformly more or less private than $M'$~\cite{kaissis2024beyond}. 

\vspace{.5em}
\desbox{des:3}{
    The definition accurately represents privacy guarantees of common practical mechanisms with low regret.
}
\noindent Out of the standard approaches, the only information-theoretically complete representations of privacy guarantees for \emph{all} mechanisms are the full privacy profile $\delta(\varepsilon)$ and the trade-off curve $f$. Unfortunately, these representations do not satisfy \cref{des:1}.  If we want a compact representation, we must lose representational power for some mechanisms. This desideratum states that we should \emph{not} lose representational power for the commonly deployed mechanisms in practice. 

The trade-off curve associated with a single $(\varepsilon, \delta)$ pair does not approximate well many practical mechanisms in machine learning, as we demonstrate in \cref{fig:fig1} and \cref{sec:applicability-gdp}. R\'enyi-based definitions---RDP and zCDP---are also known to not be able to precisely capture the trade-off curves~\cite{balle2020hypothesis,asoodehconversion, zhu2022optimalaccountingdifferentialprivacy}. 
We demonstrate this in \cref{fig:fig1}(c), where we show that the numerical accountants and GDP yield tighter characterizations than zCDP and the entire RDP curve $\varepsilon(t)$.
Thus, out of the parameterizations in \cref{sec:preliminaries}, only GDP and the profiles satisfy \cref{des:3}.

\section{Proposed Framework for Reporting Privacy}
\label{sec:proposed-framework}

\begin{figure*}[t]
\centering
\begin{minipage}[t]{0.48\linewidth}
\begin{algorithm}[H]
\caption{Reporting pessimistic, non-asymptotic $\mu$-GDP}
\label{alg:tight_gdp}

\begin{algorithmic}[1]

\STATE Compute trade-off function $f$ via numerical accountants

\STATE Obtain the tight GDP guarantee:
\[
\mu^* \leftarrow \inf\{\mu \geq 0 ~\mid~ \forall \alpha: \: f_{\mu}(\alpha) \leq f(\alpha)\}.
\]

\STATE Evaluate regret (optional):
\[
    \Delta \leftarrow \inf\{\kappa \geq 0 ~\mid~ \forall \alpha: \: f(\alpha + \kappa) - \kappa \leq f_{\mu^*}(\alpha) \}
\]

\STATE \textbf{return} $\mu^*$, $\Delta$
\end{algorithmic}
\end{algorithm}
\end{minipage}
\hfill
\begin{minipage}[t]{0.48\textwidth}
\vspace{1em}
\begin{lstlisting}[language=Python, 
    label={lst:gdpnum_impl},
    basicstyle=\footnotesize\ttfamily,
    keywordstyle=\color{blue}\bfseries,
    commentstyle=\color{green!50!black}\itshape,
    stringstyle=\color{red},
    showstringspaces=false,
    breaklines=true,
    backgroundcolor=\color{white}]
import gdpnum

# Example for DP-SGD.
accountant = gdpnum.CTDAccountant()
data_loader = ...
for mini_batch in data_loader:
    ...
    accountant.step(noise_multiplier=1.0, 
                    sample_rate=0.001)

# Computing mu and regret
mu, regret = accountant.get_mu_and_regret()
\end{lstlisting}
\end{minipage}
\caption{Procedure for reporting the pessimistic, non-asymptotic $\mu$-GDP guarantee (left), and the corresponding instantiation using our Python library (right).}
\label{fig:gdp-procedure}
\end{figure*}

Given the discussion in \cref{sec:desiderata}, we propose the following approach to reporting privacy guarantees. 

\paragraph{Reporting pessimistic, non-asymptotic $\mu$-GDP}
We propose the following procedure for computing the pessimistic, non-asymptotic GDP guarantee:
\begin{enumerate}
    \item Compute the trade-off function $f$ via open-source numerical accountants.
    \item Obtain a non-asymptotic tight $\mu$-GDP guarantee by solving \cref{eq:mu}. The resulting $\mu$ can always be reported as a valid privacy bound.
    \item Optionally, in order to evaluate the accuracy of the $\mu$-GDP bound, evaluate the regret using \cref{eq:delta-div}.
\end{enumerate}
We outline this algorithm at the high level, as well as show the interface using our software in \cref{fig:gdp-procedure}. For technical details, see \appref{app:tech_details}. Note that the entire procedure executes in seconds on commodity hardware.

If regret is $ < 10^{-2}$, $\mu$-GDP provides an essentially complete picture of the privacy guarantees.

\paragraph{Intepreting regret} A natural question that arises from our proposal is what is a good enough value of regret, and why do we suggest $<10^{-2}$? For this, we provide an operational interpretation. Consider \emph{advantage}~\cite{yeom2018privacy,kaissis2024beyond,kulynych2024attack}:
\begin{equation}\label{eq:advantage}
    \eta(f) \define \max_{\alpha \in [0, 1]} 1 - \alpha - f(\alpha),
\end{equation}
i.e., the highest achievable difference between attack $\text{TPR}=1-\text{FNR}$ and $\text{FPR}$, equivalent to the highest achievable normalized accuracy of MIAs. As \citet{cherubin2024closed} showed, not only does this quantity bound MIA accuracy, but also the advantage over random guessing of attribute inference~\cite{yeom2018privacy} and record reconstruction~\cite{balle2018privacy} attacks.
\begin{restatable}{proposition}{advantage}
    \label{stmt:delta-as-tv}
    For any two valid trade-off curves $f, \tilde f$, we have that:
    \begin{equation}
        |\eta(f) - \eta(\tilde f)| \leq 2 \Delta^{\leftrightarrow}(f, \tilde f).
    \end{equation}
\end{restatable}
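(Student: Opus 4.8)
The plan is to unfold both definitions and reduce the claim to a two-sided pointwise comparison. Write $\kappa = \Delta^{\leftrightarrow}(f, \tilde f)$. First I would dispose of the easy regime: for any valid trade-off curve $g$ we have $0 \le 1 - \alpha - g(\alpha) \le 1$ pointwise (the left inequality from $g(\alpha) \le 1 - \alpha$, the right from $g \ge 0$ and $\alpha \ge 0$), so $\eta(g) \in [0,1]$ and hence $|\eta(f) - \eta(\tilde f)| \le 1$. Thus whenever $\kappa \ge \tfrac12$ the bound $2\kappa \ge 1$ holds trivially, and I may assume $\kappa < \tfrac12$. By the symmetry of the roles of $f$ and $\tilde f$ it then suffices to prove the one-sided bound $\eta(f) - \eta(\tilde f) \le 2\kappa$.

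Next I would record the shift inequality that $\kappa$ supplies. Since $\kappa = \max\{\Delta(f,\tilde f), \Delta(\tilde f, f)\}$, in particular $\Delta(\tilde f, f) \le \kappa$. The predicate defining $\Delta$ in \cref{eq:delta-div} is monotone in the shift parameter---if $\tilde f(\alpha + \kappa') - \kappa' \le f(\alpha)$ holds for all $\alpha$, it also holds for every $\kappa'' \ge \kappa'$, because $\tilde f$ is non-increasing and the extra $-\kappa''$ only helps---and the defining infimum is attained by continuity of trade-off curves. Consequently the inequality holds at $\kappa$ itself:
\[
    \tilde f(\alpha + \kappa) - \kappa \le f(\alpha) \qquad \text{for all } \alpha \in [0,1] \text{ with } \alpha + \kappa \le 1 .
\]

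The core of the argument is a single-point comparison. Let $\alpha^\ast \in [0,1]$ be a maximizer, so $\eta(f) = 1 - \alpha^\ast - f(\alpha^\ast)$ (the maximum is attained since $\alpha \mapsto 1 - \alpha - f(\alpha)$ is continuous on the compact interval). If $\alpha^\ast \le 1 - \kappa$, I evaluate the advantage of $\tilde f$ at the shifted point $\alpha^\ast + \kappa \in [0,1]$ and apply the displayed inequality with $\alpha = \alpha^\ast$:
\[
    \eta(\tilde f) \ge 1 - (\alpha^\ast + \kappa) - \tilde f(\alpha^\ast + \kappa) \ge 1 - \alpha^\ast - \kappa - \big(f(\alpha^\ast) + \kappa\big) = \eta(f) - 2\kappa ,
\]
which rearranges to $\eta(f) - \eta(\tilde f) \le 2\kappa$.

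The main obstacle is the boundary case $\alpha^\ast > 1 - \kappa$, where the shifted point $\alpha^\ast + \kappa$ leaves the domain $[0,1]$ and the displayed inequality is no longer available. The resolution I would use is the observation that in this regime the advantage of $f$ is automatically small: since $f(\alpha^\ast) \ge 0$, we get $\eta(f) = 1 - \alpha^\ast - f(\alpha^\ast) \le 1 - \alpha^\ast < \kappa$, and because $\eta(\tilde f) \ge 0$ this already gives $\eta(f) - \eta(\tilde f) < \kappa \le 2\kappa$. Combining the two cases yields $\eta(f) - \eta(\tilde f) \le 2\kappa$; interchanging $f$ and $\tilde f$ (which uses $\Delta(f, \tilde f) \le \kappa$ in place of $\Delta(\tilde f, f) \le \kappa$) gives the reverse bound, and together they establish $|\eta(f) - \eta(\tilde f)| \le 2\Delta^{\leftrightarrow}(f, \tilde f)$. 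The only points requiring care are the attainment of the defining infimum and the domain bookkeeping near $\alpha = 1$, both handled above.
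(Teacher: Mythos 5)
Your proof is correct, but it takes a genuinely different route from the paper's. The paper derives the statement abstractly: it invokes \cref{stmt:pp-delta-to-adv} from \citet{kaissis2024beyond}, which identifies the advantage as twice the distance to the perfect-privacy curve, $\eta(f) = 2\Delta^{\leftrightarrow}(f_{pp}, f)$ with $f_{pp}(\alpha) = 1 - \alpha$, and then applies the triangle inequality for the metric $\Delta^{\leftrightarrow}$ in both directions. You instead unfold \cref{eq:delta-div} directly, transport the maximizer $\alpha^\ast$ of $1 - \alpha - f(\alpha)$ through the shift inequality $\tilde f(\alpha^\ast + \kappa) - \kappa \leq f(\alpha^\ast)$, and handle the boundary case $\alpha^\ast > 1 - \kappa$ by noting $\eta(f) < \kappa$ there. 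Your route is more elementary and self-contained: it needs neither the perfect-privacy lemma nor the fact that $\Delta^{\leftrightarrow}$ satisfies the triangle inequality (a nontrivial property established in the cited work), and it actually yields the slightly sharper one-sided bounds $\eta(f) - \eta(\tilde f) \leq 2\Delta(\tilde f, f)$ and $\eta(\tilde f) - \eta(f) \leq 2\Delta(f, \tilde f)$, each involving only the relevant one-sided divergence rather than the symmetrization. What the paper's approach buys is brevity and conceptual placement---the result falls out of the metric geometry in four lines---at the cost of importing external machinery. Two small remarks on your write-up: the attainment of the infimum defining $\Delta$ is not actually needed (run the argument at any $\kappa' > \kappa$ and let $\kappa' \downarrow \kappa$), which would let you drop the continuity discussion entirely; and your restriction of the constraint in \cref{eq:delta-div} to $\alpha + \kappa \leq 1$ is the right reading, since under the standard zero-extension of trade-off curves beyond $1$ the constraint is vacuous for $\alpha + \kappa > 1$.
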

We provide the proof in \appref{app:metric}.
Thus, the regret threshold of $10^{-2}$ ensures that the highest advantage of inference attacks is pessimistically over-reported by \emph{at most $2$ percentage points}.

Additionally, we present empirical results in \appref{app:plots} that show that, on both standard and log-log scales, the $\mu$-GDP trade-off curve closely follows the original $f$ up to numeric precision for different instantiations of DP when the regret is $<10^{-2}$.

\paragraph{Fallbacks when GDP is not a good representation}\label{sec:tier2}
If regret from using GDP is high or the mechanism cannot satisfy GDP (see \cref{sec:non-uses}), we propose that the practitioners always report the tightest privacy guarantee available, e.g., the tabulation of the privacy profile or a trade-off curve, or the $\rho$-zCDP parameter, in addition to the standard practice of reporting $\varepsilon$ at fixed $\delta$.

\begin{table*}[t]
    \centering
    \caption{\textbf{Unlike $\varepsilon$ with data-dependent values of $\delta$, reporting $\mu$ enables comparisons of mechanisms in terms of privacy guarantees across settings and datasets.} The table shows the before and after comparison of Table 1 from \citet{de2022unlockinghighaccuracydifferentiallyprivate} using our proposed approach, i.e., reporting a conservative $\mu$-GDP guarantee computed with numeric accounting.
    The regret of reporting GDP over the \emph{full privacy profile or the full trade-off curve} is less than $10^{-3}$ (see \appref{app:plots}).}\label{tab:de_et_al}
    \vspace{.5em}
    \resizebox{\textwidth}{!}{
    \begin{tabular}{c c}
        Before & After \\
        \begin{tabular}{l c c c c c c c}
            \hline
            \textbf{Dataset} & \textbf{Pre-Training} & 
            \multicolumn{5}{c}{\textbf{Top-1 Accuracy (\%)}} \\ 
            \hline
            & & $\varepsilon=1$ & $\varepsilon=2$ & $\varepsilon=4$ & $\varepsilon=8$ & $\delta$ & \\ 
            \hline
            CIFAR-10 & -- & 56.8 & 65.9 & 73.5 & 81.4 & $10^{-5}$ \\ 
            ImageNet & -- & -- & -- & -- & 32.4 & $8 \cdot 10^{-7}$ \\ 
            \hline
            CIFAR-10 & ImageNet & 94.7 & 95.4 & 96.1 & 96.7 & $10^{-5}$ \\ 
            CIFAR-100 & ImageNet & 70.3 & 74.7 & 79.2 & 81.8 & $10^{-5}$ \\ 
            \hline
            ImageNet & JFT-4B & 84.4 & 85.6 & 86.0 & 86.7 & $8 \cdot 10^{-7}$ \\ 
            Places-365 & JFT-300M & -- & -- & -- & 55.1 & $8 \cdot 10^{-7}$ \\ 
            \hline
        \end{tabular}
        &
        \begin{tabular}{l c c c c c c c}
            \hline
            \textbf{Dataset} & \textbf{Pre-Training} & 
            \multicolumn{5}{c}{\textbf{Top-1 Accuracy (\%)}} \\ 
            \hline
            & & $\mu=0.21$ & $\mu = 0.39$ & $\mu = 0.72$ & $\mu = 1.3$ &\\ 
            \hline
            CIFAR-10 & -- & 56.8 & 65.9 & 73.5 & 81.4 \\ 
            ImageNet & -- & -- & -- & -- & 32.4 \\ 
            \hline
            CIFAR-10 & ImageNet & 94.7 & 95.4 & 96.1 & 96.7 \\ 
            CIFAR-100 & ImageNet & 70.3 & 74.7 & 79.2 & 81.8 \\ 
            \hline
            ImageNet & JFT-4B & 84.4 & 85.6 & 86.0 & 86.7 \\ 
            Places-365 & JFT-300M & -- & -- & -- & 55.1 \\ 
            \hline
        \end{tabular}
    \end{tabular}
    }
\end{table*}

\section{Example Usage}
\label{sec:applicability-gdp}

In this section, we demonstrate how GDP can accurately represent privacy guarantees for key algorithms.

\paragraph{DP-SGD} %
We empirically observe that the trade-off curve of DP-SGD (when using the add/remove neighbouring relation and exact Poisson subsampling) with practical privacy parameters is close to Gaussian trade-off curve. As an example, we use noise scale $\sigma = 9.4$, subsampling rate $p = 2^{14} / \num{50000}$, and $\num{2000}$ iterations, following the values used by \citet{de2022unlockinghighaccuracydifferentiallyprivate} to train a 40-layer Wide-ResNet to an accuracy of $81.4\%$ on CIFAR-10 under $(\varepsilon = 8, \delta = 10^{-5})$-DP. We observe in \cref{fig:de} that this algorithm is $\mu = 1.57$-GDP with regret $\approx 10^{-3}$, indicating that the $\mu = 1.57$-GDP guarantee captures the privacy properties of the algorithm almost perfectly. See \appref{app:plots} for more figures similar to this one.

Furthermore, we reproduce Table 1 from \citet{de2022unlockinghighaccuracydifferentiallyprivate} in our \cref{tab:de_et_al}, and compare their presentation with a version using our proposed approach side-by-side. 
Crucially, all the privacy parameters $\mu$ are comparable across settings, unlike $\varepsilon$ values which are only comparable when $\delta$ is the same.

We further investigate the regime over which a $\mu$-GDP guarantee fits well for DP-SGD in \cref{fig:dpsgd-sweep}. Darker colors denote a higher number of compositions. We observe that, for fixed noise parameter $\sigma$ (i.e. fixed color in \cref{fig:dpsgd-sweep}) and sampling probability, increasing compositions always leads to a better $\mu$-GDP fit and a lower regret. 
For fixed number of compositions (i.e., fixed darkness of the lines) and sampling rate, the higher the noise parameter $\sigma$ the better is the $\mu$-GDP fit. There is a non-monotonic relation between the sampling rate and regret for fixed noise parameter $\sigma$ and number of compositions. This non-trivial dependence highlights the need for care when summarizing DP-SGD with a $\mu$-GDP guarantee. From \cref{fig:dpsgd-sweep}, however, we observe:

\begin{tcolorbox}[
    colback=white,
    colframe=black,
    boxrule=1pt,
    left=0pt,
    right=0pt,
    top=0pt,
    bottom=0pt
]
    \textbf{Rule of thumb.} Any DP-SGD algorithm run with noise parameter $\sigma \geq 2$ and number of iterations $T \geq 400$ will satisfy a $\mu$-GDP guarantee with regret less than 0.01.  
\end{tcolorbox}

\begin{figure}
    \centering
    \includegraphics[width=\figwidth]{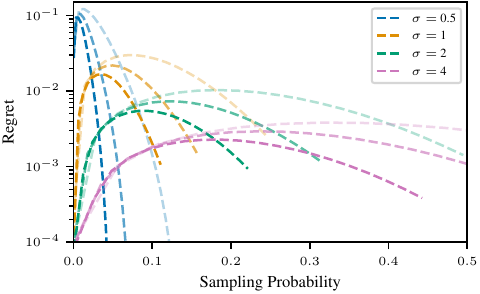}
    \caption{Worst-case regret values as a function of the sampling rate in DP-SGD for various choices of noise parameter $\sigma$ and compositions. We sweep over $T = \{400, 1000,2000\}$ compositions, with darker lines indicating higher composition numbers. }
    \label{fig:dpsgd-sweep}
\end{figure}

\begin{figure*}[t]
    \centering
    \subfigure[TopDown]{
        \centering
        \includegraphics[width=0.32\linewidth]{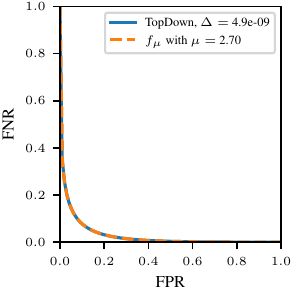}
        \label{fig:app_third_row_errors_topdown}}
    \subfigure[Randomized Response ($\varepsilon = 1$)]{
        \centering
        \includegraphics[width=0.32\linewidth]{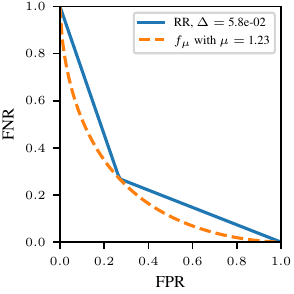}
        \label{fig:app_third_row_errors_rr}}
     \caption{Numerically evaluated trade-off curves and the best conservative $\mu$-GDP bounds for (a) The TopDown algorithm; and (b) Randomized Response.}
    \label{fig:large_combined_figure}
\end{figure*}

\paragraph{Top-Down algorithm}
We replicate the results from \citet{su20242020unitedstatesdecennial}, which reanalysed the privacy accounting in the TopDown algorithm using $f$-DP, according to the privacy-loss budget allocation
released on August 25, 2022 by the US Census Bureau. Their custom accounting code takes $>9$ hours on  96×2GB virtual CPUs. We (1) show that this accounting can be done in a few seconds on a commercial laptop and (2) that the TopDown algorithm is tightly characterized by GDP, achieving $\mu = 2.702$-GDP. See \cref{fig:large_combined_figure} (middle).

\paragraph{Other algorithms}
Multiple practical DP algorithms for deep learning~\cite{kairouz2021practical}, synthetic data generation~\cite{lin2023differentially}, privacy-preserving statistical modelling~\cite{kulkarni2021differentially,rho2022differentially,raisa2024noise}, are based on the composition of simple Gaussian mechanisms. For such algorithms no accounting machinery is needed: GDP can be directly analyzed and reported.

\paragraph{Practical considerations}
\label{sec:practical-considerations}

Typical DP machine learning papers report results with $(\varepsilon, \delta)$-DP with small integer powers of $0.1$ as $\delta$. In \cref{tab:epsilon_mu}, we provide a conversion table between $\mu$-GDP and $(\varepsilon, \delta)$-DP with suggested replacements for commonly used values. For a general intuition on mapping $\varepsilon$ values to $\mu$ values, at $\delta = 10^{-5}$, $\mu = 0.5$ corresponds to roughly $\varepsilon \approx 2$, $\mu = 1$ corresponds to roughly $\varepsilon \approx 4$, and $\mu = 2$ to $\varepsilon \approx 10$. The table also gives a practical illustration of the difficulty of interpreting $(\varepsilon,\delta)$, as Gaussian mechanism with $(\varepsilon=8,\delta=10^{-9})$ is more private than $(\varepsilon=6,\delta=10^{-5})$.

In \cref{tab:risk}, we also provide a mapping from various $\mu$ values to the notion of risk captured by the points on the $f$-DP trade-off curve. These points can be interpreted in two ways. First, by definition, they show the maximum TPR of any membership inference attack with a given FPR. Second, there is an additional interpretation, recently demonstrated by the subset of the authors~\cite{kulynych2025unifying}: TPR obtained from $f$-DP is a conservative upper bound on the maximum attack success rate of any singling out, attribute inference, or data reconstruction attack whose baseline success rate (i.e., without the adversary observing the mechanism's output) equals the considered FPR.

In the table, we also show $\eta$-TV privacy which corresponds to different values of $\mu$, which can also be interpreted in two corresponding ways. First, it shows maximum advantage of membership inference attacks (see \cref{sec:proposed-framework}). Second, it shows maximum difference between any singling out, attribute inference, or data reconstruction success rate and the attack's baseline success rate prior to observing the mechanism output~\cite{cherubin2024closed,kulynych2025unifying}.

\begin{table}[tb]
    \centering
    \caption{Values of $\mu$ corresponding to common values of $(\varepsilon, \delta)$.}
    \label{tab:epsilon_mu}
    \begin{tabular}{crrr}
    $\varepsilon \downarrow / \delta \rightarrow$ & $10^{-5}$ & $10^{-6}$ & $10^{-9}$ \\
    \cmidrule(l{3pt}r{3pt}){1-1}\cmidrule(l{3pt}r{3pt}){2-4}
    0.1 & 0.03 & 0.03 & 0.02 \\
    0.5 & 0.14 & 0.12 & 0.09 \\
    1.0 & 0.27 & 0.24 & 0.18 \\
    2.0 & 0.50 & 0.45 & 0.35 \\
    4.0 & 0.92 & 0.84 & 0.67 \\
    6.0 & 1.31 & 1.20 & 0.97 \\
    8.0 & 1.67 & 1.53 & 1.26 \\
    10.0 & 2.00 & 1.85 & 1.54 \\
    \end{tabular}
    \vspace{-1em}
\end{table}

\begin{table*}[]
    \centering
        \caption{Mapping of $\mu$ values to operational risk. We show maximum TPR @ FPR of membership inference attacks, which can also be interpreted as maximum success rate at given baseline success rate of attacks such as singling out, attribute inference, data reconstruction attacks. We also show $\eta$-TV privacy, which corresponds to maximum advantage: TPR $-$ FPR for membership inference attacks, and success rate minus the baseline rate for other attacks.}
    \label{tab:risk}

\begin{tabular}{ccrrrrrrrrr}
& & \multicolumn{9}{c}{TPR @ FPR} \\
$\mu$ & $\eta$ & 0.0001 & 0.001 & 0.01 & 0.1 & 0.25 & 0.50 & 0.75 & 0.95 & 0.99 \\
\cmidrule(l{3pt}r{3pt}){1-1}\cmidrule(l{3pt}r{3pt}){2-2}\cmidrule(l{3pt}r{3pt}){3-11}
0.10 & 0.0399 & 0.0001 & 0.0014 & 0.0130 & 0.1187 & 0.2828 & 0.5398 & 0.7807 & 0.9595 & 0.9924 \\
0.25 & 0.0995 & 0.0003 & 0.0023 & 0.0189 & 0.1511 & 0.3356 & 0.5987 & 0.8224 & 0.9709 & 0.9950 \\
0.50 & 0.1974 & 0.0006 & 0.0048 & 0.0339 & 0.2172 & 0.4307 & 0.6915 & 0.8799 & 0.9840 & 0.9976 \\
0.75 & 0.2923 & 0.0015 & 0.0096 & 0.0575 & 0.2975 & 0.5301 & 0.7734 & 0.9228 & 0.9917 & 0.9990 \\
1.00 & 0.3829 & 0.0033 & 0.0183 & 0.0924 & 0.3891 & 0.6276 & 0.8413 & 0.9530 & 0.9959 & 0.9996 \\
1.25 & 0.4680 & 0.0068 & 0.0329 & 0.1409 & 0.4874 & 0.7175 & 0.8944 & 0.9729 & 0.9981 & 0.9998 \\
1.50 & 0.5467 & 0.0132 & 0.0559 & 0.2043 & 0.5865 & 0.7955 & 0.9332 & 0.9852 & 0.9992 & 0.9999 \\
2.00 & 0.6827 & 0.0428 & 0.1378 & 0.3721 & 0.7638 & 0.9075 & 0.9772 & 0.9963 & 0.9999 & 1.0000 \\
\end{tabular}
\end{table*}

\section{Non-Uses of GDP and Open Problems}
\label{sec:non-uses}
In this section, we show examples of DP definitions and primitives that are either not tightly characterized by GDP, or whose tight characterization in terms of GDP or privacy profiles is still an open problem. If these mechanisms are encountered in practice, we suggest that the practitioners report the tightest privacy guarantee available (see \cref{sec:proposed-framework} for details). We provide the proofs of formal statements in \appref{app:proofs}.

\paragraph{Mechanisms that are only known to satisfy a single DP guarantee} The mechanisms that are only known to satisfy $\varepsilon$-DP are not well-characterized by GDP.
\begin{restatable}{proposition}{dptogdp}
    \label{stmt:dp-to-gdp}
    Any $\varepsilon$-DP mechanism satisfies GDP with $\mu = -2 \Phi^{-1}\left(\frac{1}{e^\varepsilon + 1}\right)$. 
\end{restatable}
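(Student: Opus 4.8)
The plan is to reduce the statement to a pointwise comparison of two trade-off curves and then settle that comparison with a short convexity argument. By \cref{eq:dp-to-f}, an $\varepsilon$-DP mechanism is exactly $f_{\varepsilon,0}$-DP, so its trade-off curve dominates $f_{\varepsilon,0}(\alpha) = \max\{0,\, 1 - e^\varepsilon\alpha,\, e^{-\varepsilon}(1-\alpha)\}$. To conclude $\mu$-GDP it therefore suffices to show that $f_\mu(\alpha) \leq f_{\varepsilon,0}(\alpha)$ for all $\alpha\in[0,1]$, since then the mechanism's trade-off curve dominates $f_{\varepsilon,0}$, which in turn dominates $f_\mu$, and domination of trade-off curves is exactly the $f$-DP containment we need.

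First I would determine the shape of $f_{\varepsilon,0}$. The two nontrivial linear pieces $1 - e^\varepsilon\alpha$ and $e^{-\varepsilon}(1-\alpha)$ cross at $\alpha^\star = \tfrac{1}{e^\varepsilon+1}$, where both equal $\alpha^\star$; that is, the curve has a corner lying exactly on the diagonal $\beta=\alpha$. Hence
\begin{equation}
f_{\varepsilon,0}(\alpha) = \begin{cases} 1 - e^\varepsilon\alpha, & \alpha\in[0,\alpha^\star],\\ e^{-\varepsilon}(1-\alpha), & \alpha\in[\alpha^\star,1]. \end{cases}
\end{equation}
Choosing $\mu = -2\Phi^{-1}(\alpha^\star) = -2\Phi^{-1}\!\big(\tfrac{1}{e^\varepsilon+1}\big)$ is precisely the value making the Gaussian curve pass through this corner: using $\Phi^{-1}(1-\alpha^\star) = -\Phi^{-1}(\alpha^\star)$ one checks $f_\mu(\alpha^\star) = \Phi(\Phi^{-1}(1-\alpha^\star) - \mu) = \Phi(\Phi^{-1}(\alpha^\star)) = \alpha^\star$.

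The crux is the global inequality $f_\mu \leq f_{\varepsilon,0}$, which I would establish separately on each linear piece by a concavity argument. On $[0,\alpha^\star]$ set $g(\alpha) = (1 - e^\varepsilon\alpha) - f_\mu(\alpha)$. Because $f_\mu$ is a valid trade-off curve it is convex, so $g'' = -f_\mu'' \leq 0$, i.e.\ $g$ is concave. Its endpoint values vanish: $g(0) = 1 - f_\mu(0) = 0$ and $g(\alpha^\star) = \alpha^\star - \alpha^\star = 0$ by the corner computation above. A function that is concave on an interval and zero at both endpoints is nonnegative throughout, so $f_\mu \leq 1 - e^\varepsilon\alpha = f_{\varepsilon,0}$ on $[0,\alpha^\star]$. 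The identical argument with $h(\alpha) = e^{-\varepsilon}(1-\alpha) - f_\mu(\alpha)$, using $h(\alpha^\star)=0$ and $h(1) = -f_\mu(1) = 0$, handles $[\alpha^\star,1]$; alternatively one invokes the symmetry of both $f_\mu$ and $f_{\varepsilon,0}$ about the diagonal. This yields $f_\mu \leq f_{\varepsilon,0}$ on all of $[0,1]$ and hence the claim.

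The main obstacle is exactly this global step: a direct pointwise check is awkward because $f_\mu$ is transcendental, but the observation that the difference against each linear piece is concave and pinned to zero at both ends converts it into a one-line argument. I would also remark that the chosen $\mu$ is tight: for any $\mu' < \mu$ the Gaussian curve is strictly higher at interior points, so $f_{\mu'}(\alpha^\star) > f_\mu(\alpha^\star) = \alpha^\star = f_{\varepsilon,0}(\alpha^\star)$, violating domination at the corner; thus $\mu$ coincides with $\mu^\star$ of \cref{eq:mu} for the worst-case $\varepsilon$-DP curve.
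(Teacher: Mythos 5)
Your proof is correct and takes essentially the same route as the paper's: both reduce to the worst-case pure-DP trade-off curve (the randomized-response curve $f_{\varepsilon,0}$), choose $\mu$ so that $f_\mu$ passes through its corner $\bigl(\tfrac{1}{e^\varepsilon+1},\tfrac{1}{e^\varepsilon+1}\bigr)$ on the diagonal (the paper derives the same condition by matching total variation), and conclude $f_\mu \leq f_{\varepsilon,0}$ from convexity of $f_\mu$ through the three shared points $(0,1)$, $(\alpha^\star,\alpha^\star)$, $(1,0)$. Your concavity-of-the-difference argument on each linear piece is just the chord-above-a-convex-function fact the paper invokes, and your closing tightness remark is a correct addition not spelled out there.
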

\cref{fig:large_combined_figure} (right) shows the resulting trade-off curve using randomized response as the $\varepsilon$-DP mechanism. Although GDP tightly captures the point closest to the origin as well as $\text{FPR} \in \{0, 1\}$, it is suboptimal for other regimes. In particular, it is extremely conservative in the low \textsc{fpr} regime, and reporting the GDP guarantee has a regret of $0.058$. 

Moreover, mechanisms that are \emph{only} known to satisfy a single $(\varepsilon, \delta)$-DP guarantee for $\delta > 0$ do not provide any meaningful GDP guarantee.
\begin{restatable}{proposition}{adptogdp}
    \label{stmt:adp-to-gdp}
    For any $\varepsilon \in [0, \infty), \delta \in (0, 1]$, there exists an $(\varepsilon, \delta)$-DP mechanism that does not satisfy $\mu$-GDP for any finite $\mu$.
\end{restatable}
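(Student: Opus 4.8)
The plan is to exploit a structural rigidity of Gaussian trade-off curves at the corner $\alpha = 0$. From \cref{eq:gdp}, for any finite $\mu$ we have $f_\mu(0) = \Phi(\Phi^{-1}(1) - \mu) = \Phi(+\infty) = 1$, and more strongly $f_\mu(\alpha) \to 1$ as $\alpha \to 0^+$ by continuity of $\Phi$ and $\Phi^{-1}$. Operationally, this says a $\mu$-GDP mechanism can never admit a \emph{perfect one-sided detection event}: there is no test achieving strictly positive $\textsc{tpr}$ at $\textsc{fpr} = 0$, because two unit-variance Gaussians share full support. First I would record this fact and note that, since $\mu$-GDP requires the mechanism's trade-off curve $f_M$ to satisfy $f_M(\alpha) \geq f_\mu(\alpha)$ for \emph{every} $\alpha \in [0,1]$, any finite $\mu$ forces $\lim_{\alpha \to 0^+} f_M(\alpha) = 1$. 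The whole strategy is then to construct an $(\varepsilon, \delta)$-DP mechanism whose trade-off curve is bounded away from $1$ near the corner.

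For given $\varepsilon \geq 0$ and $\delta \in (0,1]$, I would take the canonical ``leaky'' mechanism, specified on a single neighboring pair $S \simeq S'$ (all other neighboring pairs being identically distributed, hence imposing no constraint). Let the output space be $\{\bot_S, \bot_{S'}, \ast\}$, with $M(S)$ emitting $\bot_S$ with probability $\delta$ and $\ast$ with probability $1 - \delta$, while $M(S')$ emits $\bot_{S'}$ with probability $\delta$ and $\ast$ with probability $1 - \delta$. Since the two distributions differ only on the mutually exclusive symbols $\bot_S, \bot_{S'}$, each carrying mass $\delta$, we have $|\Pr[M(S) \in E] - \Pr[M(S') \in E]| \leq \delta$ for every event $E$; hence $M$ is $(0,\delta)$-DP and therefore $(\varepsilon, \delta)$-DP for all $\varepsilon \geq 0$. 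This is the step I would verify carefully, as it must hold for the prescribed $\varepsilon$.

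Finally I would evaluate the trade-off curve at the corner. The symbol $\bot_{S'}$ occurs only under $M(S')$, so the test rejecting $H_0 : \theta \sim M(S)$ exactly when $\theta = \bot_{S'}$ has $\textsc{fpr} = 0$ and rejects with probability $\delta$ under $H_1$; it thus attains $\textsc{fnr} = 1 - \delta$ at $\textsc{fpr} = 0$, giving $f_M(0) \leq 1 - \delta < 1$. By monotonicity and convexity $f_M(\alpha) \leq (1-\delta)(1-\alpha)$ remains bounded away from $1$ near the corner, so for sufficiently small $\alpha$ we obtain $f_\mu(\alpha) > f_M(\alpha)$, contradicting the domination requirement for \emph{any} finite $\mu$. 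I expect the only genuine subtlety to be endpoint handling: rather than arguing at the isolated point $\alpha = 0$ (where one might worry about how the infimum defining the trade-off curve behaves), it is cleaner to invoke the limit $f_\mu(\alpha) \to 1$ and derive the contradiction on a neighborhood of the corner, which makes the argument robust to any boundary technicality.
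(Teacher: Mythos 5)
Your proposal is correct and rests on exactly the same obstruction as the paper's proof: any finite-$\mu$ GDP curve has $f_\mu(0)=1$, while a mechanism tightly achieving the $(\varepsilon,\delta)$ guarantee has trade-off value $1-\delta<1$ at $\alpha=0$ (the paper simply cites $f_{\varepsilon,\delta}(0)=1-\delta$). You additionally make explicit the leaky three-symbol mechanism realizing this curve and handle the corner via a limiting argument on a neighborhood of $\alpha=0$, both of which the paper leaves implicit but which change nothing in substance.
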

This is a problem particularly for mechanisms that can catastrophically fail, i.e., their trade-off curve is such that $f(0) < 1$, e.g., leaky randomized response mechanism. %
In such cases, GDP is not applicable.

\paragraph{Report-Noisy-Max Mechanisms}
The exponential mechanism
is a special case of the Report-Noisy-Max (RNM) mechanism~\cite{dwork2014algorithmic}. This mechanism is used, e.g., in the PATE DP learning framework~\cite{papernot2018scalable,papernot2017semi}. It remains an open problem if a closed form GDP, trade-off function, or privacy profile characterization exists for the general RNM mechanism.

\paragraph{Smooth Sensitivity and Propose-Test-Release} Frameworks such as smooth sensitivity~\cite{nissim2007smooth} and Propose-Test-Release (PTR)~\cite{dwork2006calibrating} only have known analyses in terms of pure or approximate DP. Obtaining an analysis in terms of trade-off curves or privacy profiles for these mechanisms or their variants, is an open question. 

\section{Concluding Remarks}

In this paper, we used recent advances in DP to derive a correct, i.e., pessimistic, Gaussian DP guarantee for any mechanism which admits tight analyses in terms of privacy profiles or trade-off curves, such as DP-SGD. We empirically showed that, in many practical scenarios, GDP---a concise, single-parameter representation of privacy guarantees---carries practically equivalent information about the privacy guarantees of an algorithm as the entire privacy profile, unlike other parameterizations such as a single $(\varepsilon, \delta)$-DP pair. 

These theoretical and empirical findings have important practical implications when reporting privacy guarantees, as there are at least two distinct audiences to consider: i) regulators or others defining allowable privacy budget; ii) researchers and engineers developing and comparing algorithms. Reporting $\mu$-GDP is particularly well-suited for the first group, as it provides a compact representation of the full privacy profile that is common for many practical mechanisms, from which, e.g., one can derive any required interpretable notion of privacy risk. For researchers and developers, $\mu$-GDP offers significant advantages in comparing mechanisms across different settings, though these users will sometimes need detailed analysis of mechanisms whose privacy properties are inaccurately captured by GDP.
The required information is contained in trade-off curves or privacy profiles, and reporting them numerically would be one possible approach.

For many common ML applications, our proposed framework enables concise communication of privacy guarantees with a single number, correct comparability of mechanisms across different settings, and precise characterizations of risks.

\section{Alternative Viewpoints}

We have encountered two common reactions to the proposal: (1) GDP is an asymptotic notion of privacy (which is incorrect), and that (2) we propose another two-parameter notion of privacy like $(\varepsilon, \delta)$-DP. We address these two points below.

\paragraph{A misconception that GDP is an asymptotic guarantee} Earlier work on GDP has focused on deriving asymptotic approximations of GDP~\citep{dong2022gaussian,Bu2020Deep}, and these approaches can lead to optimistic results (i.e. underestimating $\varepsilon$ instead of overestimating) ~\citep{gopi2021numerical}. Because of the focus in early work on asymptotic analyses, there is a common misconception that GDP is an asymptotic guarantee in principle, which is not true.
Our proposal uses pessimistic, non-asymptotic GDP bounds, which can be easily computed from standard numerical privacy accountants, as we described in \Cref{sec:proposed-framework}.

\paragraph{Difference in the semantics of regret and $\delta$} Our proposal suggests to optionally check whether a GDP guarantee characterizes the true trade-off curve with a low enough representation regret. Thus, one might argue that this is effectively a two-parameter characterization of privacy $(\mu, \Delta)$ just like $(\varepsilon, \delta)$, where $\Delta$ is the regret value. There is a crucial difference to $(\varepsilon, \delta)$, however. As we propose to find $\mu$ that provides a pessimistic bound on the true trade-off curve, \emph{regardless of regret,} the $\mu$ values are directly comparable across any mechanisms, datasets, papers, deployments, or settings. This is in stark contrast to $(\varepsilon, \delta)$, in which the values of $\varepsilon$ are only directly comparable if $\delta$ is the same. As there is no one standard value of $\delta$, and $\delta$ is normally data-dependent, this is unlikely. At the same time, if regret is small enough, e.g., $10^{-2}$, for practical purposes, it may be ignored.

\ifthenelse{\boolean{preprint}}{

}{
\section{LLM Usage Considerations}

\textbf{Originality.}
In this manuscript, LLMs were used either for generating boilerplate LaTeX code, or for editorial purposes. All outputs were inspected by the authors to ensure accuracy and originality. Our motivation for using an LLM was limited to improving readability (grammar, concision, and style). The literature review (identifying, reading, and selecting prior work) and all citations are the authors’ own; we did not rely on an LLM to discover or summarize related work beyond superficial wording suggestions. 

\textbf{Transparency.}
LLMs were not integral to the paper’s methodology. To avoid LLM-induced errors and non-determinism, we constrained prompts to local rewrites of text we had already written and manually reviewed every suggested change against our intended meaning and the cited sources. No technical ideas, figures, claims, definitions, theorems, or proofs were generated by an LLM. 

\textbf{Responsibility.}
We did not train any ML models for this work; results about trained models come from cited prior work. Our code (see \cref{sec:intro}) runs in milliseconds on a single-core CPU and therefore has negligible environmental impact.
}

\section*{Acknowledgements}
This project is supported by the U.S. Department of Energy, Office of Science, Office of Advanced Scientific Computing Research, Department of Energy Computational Science Graduate Fellowship under Award Number DE-SC0022158, by Swiss National Science Foundation under Award Numbers 10003518 and 237378, by the Research Council of Finland (Flagship programme: Finnish Center for Artificial Intelligence, FCAI, Grant 356499 and Grant 359111), the Strategic Research Council at the Research Council of Finland (Grant 358247), and the European Union (Project 101070617), and is part of the SYNTHIA project. SYNTHIA (Synthetic Data Generation framework for integrated validation of use cases and AI healthcare applications) is supported by the Innovative Health Initiative Joint Undertaking (IHI JU) under grant agreement No. 101172872. Thus, the project is partially funded by the European Union, the private members, and those contributing partners of the IHI JU. Views and opinions expressed are however those of the authors only and do not necessarily reflect those of the aforementioned parties. Neither of the aforementioned parties can be held responsible for them. AH acknowledges the research environment provided by ELLIS Institute Finland.

\bibliography{main}
\bibliographystyle{plainnat}

\clearpage
\ifthenelse{\boolean{preprint}}{
\appendix
}{
\appendices
}

\section{Detailed Background on Privacy Representations}\label{app:detailed}

In this section, we detail the following: the hockeystick divergence based privacy definitions of pure and approximate DP in \appref{app:adp}, the R\'enyi divergence based R\'enyi DP and zero-concentrated DP in \appref{app:rdp}, numerical accountants in \appref{app:profiles}, the hypothesis testing based definitions of $f$-DP  along with its connections to numerical accountants in \appref{app:fdp}, $\mu$-GDP in \appref{app:gdp}, and the optimal conversions from various privacy guarantees to $f$-DP in \appref{app:trade-off-curves-repr}. We begin with an overview of notation. 

\paragraph{Notation} A randomized algorithm (or mechanism) $M$ maps input datasets to probability distributions over some output space. Let $S \in \sD^N$ denote a dataset with $N$ individuals over a data record space $\sD$. We use $S \simeq S'$ to denote when two datasets are neighbouring under an (arbitrary) neighbouring relation. Let $\Theta$ denote the output space, and a specific output as $\theta \in \Theta$. We use $M(S)$ to denote both the probability distribution over $\Theta$ and the underlying random variable. We use $M \circ \tilde{M}$ to denote the adaptive \emph{composition} of two mechanisms, which outputs $M(S)$ and $\tilde{M}(S)$, where $M$ can also take the output of $\tilde{M}$ as an auxiliary input. The discussion in this section will focus exclusively on adaptive composition. $\Phi$ denotes the CDF of the standard normal distribution.

\subsection{Pure and Approximate DP}\label{app:adp}

It is useful for our discussion to use a version of the standard definition of differential privacy in terms of the hockey-stick divergence. Let $(P,Q)$ denote absolutely continuous densities with respect to some measure over some domain $\mathcal{O}$:
\begin{definition}[See, e.g., \citealp{asoodehconversion}]
    The $\gamma$-Hockey-stick divergence from distribution $P$ to $Q$ is:
    \begin{equation}
        H_\gamma(P ~\|~ Q) = \sup_{E \subseteq \mathcal{O}} [ Q(E) - \gamma P(E) ],
    \end{equation}
    where $\gamma \geq 0$.
\end{definition}
\begin{definition}[\citealp{dwork2006calibrating,dwork2014algorithmic}] For  $\varepsilon \in \sR, \delta \in [0, 1)$, a mechanism $M$ satisfies $(\varepsilon, \delta)$-DP iff for all $S \simeq S'$:
\begin{equation}
    H_{e^\varepsilon}(M(S) ~\|~ M(S')) \leq \delta.
\end{equation}
\end{definition}We say that the mechanism satisfies \emph{pure DP} if $\delta = 0$ and \emph{approximate DP} otherwise. The celebrated basic composition theorem \cite{dwork2014algorithmic} says that if $M$ satisfies $(\varepsilon, \delta)$-DP and $\tilde{M}$ satisfies $(\tilde{\varepsilon}, \tilde{\delta})$-DP, then $M \circ \tilde{M}$ satisfies $(\varepsilon + \tilde{\varepsilon}, \delta+  \tilde{\delta})$-DP. Subsequent analyses showed that this result can be improved. For the composition of $T$ arbitrary $(\varepsilon, \delta)$-DP algorithms, the optimal parameters admit a closed-form \cite{kairouz2015composition}. However, the computation for general heterogeneous composition (i.e. when mechanism $M_i$ has privacy parameters $(\varepsilon_i, \delta_i)$) is \#P-complete \cite{vadhanpsharp}, and hence only approximate algorithms that compute the composition to arbitrary accuracy are feasible in practice. 

\subsection{Privacy Definitions Based on R\'enyi-Divergence}\label{app:rdp}
The lack of simple composition results for approximate DP guarantees, especially in the context of analyzing DP-SGD, is what led \citet{mironov2017renyi} to propose R\'enyi-based privacy definitions:
\begin{definition}[\citealp{mironov2017renyi}] For $t\geq1, \varepsilon(t) \geq 0$, a mechanism $M(\cdot)$ satisfies $(t, \varepsilon(t))$-RDP iff for all $S \simeq S'$:
\begin{equation}
    D_{t}(M(S) ~\|~ M(S'))\leq \varepsilon(t).
\end{equation}
\end{definition}
Concentrated DP~\cite{dwork2016concentrated,bun2016concentrated,bun2018composable} is a related family of privacy definitions. In this discussion, we focus on the notion of $\rho$-zCDP due to \citet{bun2016concentrated}. A mechanism satisfies $\rho$-zCDP if it satisfies $(t,\rho \: t)$-RDP for all $t \geq1 $ and some $\rho \geq 0$. \emph{Optimal} compositions for these two privacy notions are similar to the basic composition of approximate DP: if $M$ satisfies $(t, \rho(t))$-RDP  and $\tilde{M}$ satisfies $(t, \tilde{\rho}(t))$-RDP, then $M \circ \tilde{M}$ satisfies $(t, \rho(t) + \tilde{\rho}(t))$-RDP. The result for $\rho$-zCDP follows as a special case. These simple composition rules contrast the involved computations for the optimal composition results in approximate DP.

A single RDP guarantee implies a continuum of approximate DP guarantees, with the optimal conversion given by \citet{asoodehconversion}. This means that R\'enyi-based approaches provide a more precise model of the privacy guarantees for any fixed mechanism compared to approximate DP. Consequently, these approaches enable a straightforward workflow for composition: first, compose R\'enyi guarantees, and then convert them to approximate DP guarantees as the final step. This workflow yielded significantly tighter approximate DP guarantees \cite{abadi2016deep}, and as such the researchers achieved their initial goal of fixing the perceived shortcomings of approximate DP. It was shown in later work, however,that the conversion from R\'enyi divergences to approximate DP is always lossy~\cite{balle2020hypothesis, asoodehconversion}, hence tighter bounds on approximate DP are possible with more advanced numerical approaches that compose approximate DP guarantees.

\subsection{Accountants, Privacy Profiles, and Dominating Pairs}\label{app:profiles}
A different line of work \cite{Meiserfft, koskela20b, koskela2021computingdifferentialprivacyguarantees, koskela21a, gopi2021numerical, doroshenko2022connectdotstighterdiscrete} focused on improving numerical algorithms that computed the approximate DP guarantees under composition without using R\'enyi divergences. In particular, these approaches focused on the heterogenous case where one aims to compose mechanisms $M_i, i \in [T]$, where each mechanism $M_i$ satisfies a collection of DP guarantees $\{\varepsilon_{i,j}, \delta_{i,j}\}_{j = 1}^k$. This is a strict generalization of the case explored by \citet{vadhanpsharp}. In its most general form, each mechanism $M_i$ satisfies a \emph{continuum} of privacy guarantees, which we refer to as the privacy profile function:
\begin{definition}[\citealp{balle2018privacy}]
    A mechanism $M(\cdot)$ has a privacy profile $\delta(\varepsilon)$ if for every $\varepsilon \in \mathbb{R}$, it is $(\varepsilon, \delta(\varepsilon))$-DP. 
\end{definition}
Hence, the goal of this line of work was to assume that mechanism $M_i$ has a privacy profile $\delta_i(\varepsilon)$, and the goal is to find the privacy profile of $M_1 \circ M_2 \circ \ldots \circ M_T$. The negative result from \citet{vadhanpsharp} implies that these privacy profiles are intractable to compute. Therefore, numerical algorithms called accountants are used to compute tight upper bounds to these privacy profiles. Many accountants, including the current state-of-the art~\cite{doroshenko2022connectdotstighterdiscrete}, makes use of a notion of dominating pairs, which we review below:
\begin{definition}[\citealp{zhu2022optimalaccountingdifferentialprivacy}]\label{def:domp}
    A pair of distributions $(P,Q)$  are a dominating pair to a pair of distributions $(A,B)$, denoted by $(A,B) \preceq (P,Q)$ if, for all $\gamma \ge 0$ we have:
    \begin{equation}
         H_{\gamma}(A ~\|~ B) \leq H_{\gamma}(P ~\|~ Q).
    \end{equation}
    Moreover, a pair of distributions $(P,Q)$ dominates a mechanism $M$ if $(M(S), M(S')) \preceq (P,Q)$ for all $S \simeq S'$. We denote this by $M \preceq (P,Q)$.
    If equality holds for all $\gamma$ in \cref{def:domp}, then we say $(P,Q)$ are tightly dominating.
\end{definition}
With dominating pairs, it is possible to compute privacy profiles for mechanisms under composition:
\begin{theorem}
    If $M \preceq (P,Q)$ and $\tilde{M} \preceq (\tilde{P}, \tilde{Q})$, then $M \circ \tilde{M} \preceq (P \otimes \tilde{P}, Q \otimes \tilde{Q})$, where $P \otimes \tilde{P}$ denotes the product distribution of $P$ and $\tilde{P}$.
\end{theorem}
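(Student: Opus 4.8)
The plan is to fix a neighbouring pair $S \simeq S'$ and verify the hockey-stick inequality $H_\gamma(M\circ\tilde M(S) \,\|\, M\circ\tilde M(S')) \le H_\gamma(P\otimes\tilde P \,\|\, Q\otimes\tilde Q)$ for every $\gamma \ge 0$, which by \cref{def:domp} is exactly what $M\circ\tilde M \preceq (P\otimes\tilde P, Q\otimes\tilde Q)$ demands. Throughout I would use the integral representation $H_\gamma(P\|Q) = \int (dP - \gamma\, dQ)_+$, where $(t)_+ = \max\{t,0\}$, and write the adaptive composition's output as a pair $(y, z)$ with $y \sim \tilde M(S)$ drawn first and $z \sim M(S)$ drawn second using $y$ as auxiliary input. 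Denoting by $\tilde a, \tilde b$ the densities of $\tilde M(S), \tilde M(S')$ and by $a_y, b_y$ the densities of $M(S), M(S')$ run with auxiliary input $y$, the joint densities factor as $\tilde a(y)\, a_y(z)$ and $\tilde b(y)\, b_y(z)$, and the whole proof is a matter of peeling off the two mechanisms one coordinate at a time, innermost (adaptive) mechanism first.

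For a fixed $y$, pulling the factor $\tilde a(y)$ out of the positive part gives the rescaling identity
\[
\int \big(\tilde a(y) a_y(z) - \gamma\, \tilde b(y) b_y(z)\big)_+ \, dz = \tilde a(y)\, H_{\gamma \tilde b(y)/\tilde a(y)}\big(M(S)\,\|\,M(S')\big),
\]
in which the hockey-stick parameter is rescaled to $\gamma\,\tilde b(y)/\tilde a(y)$. Since for each fixed auxiliary input $y$ the map $M(\cdot\,;y)$ is itself a mechanism, the hypothesis $M \preceq (P,Q)$ yields $H_{\gamma'}(M(S)\|M(S')) \le H_{\gamma'}(P\|Q)$ for every $y$ and every $\gamma' \ge 0$ with the \emph{same} pair $(P,Q)$. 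Substituting $\gamma' = \gamma\tilde b(y)/\tilde a(y)$ and folding $\tilde a(y)$ back in bounds the inner integral by $\int (\tilde a(y) P(z) - \gamma\, \tilde b(y) Q(z))_+ \, dz$; integrating over $y$ then shows $H_\gamma(M\circ\tilde M(S)\|M\circ\tilde M(S'))$ is at most the hockey-stick divergence of the ``hybrid'' pair whose first coordinate keeps the true laws $\tilde M(S),\tilde M(S')$ and whose second coordinate already uses $P,Q$. Repeating the identical manoeuvre on the first coordinate — now factoring out $P(z)$, rescaling $\gamma$ by $Q(z)/P(z)$, and invoking $\tilde M \preceq (\tilde P, \tilde Q)$ — replaces $(\tilde M(S),\tilde M(S'))$ by $(\tilde P,\tilde Q)$ and produces exactly $H_\gamma(P\otimes\tilde P \,\|\, Q\otimes\tilde Q)$ (product measures being symmetric in the coordinate labels), completing the argument.

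I expect the main obstacle to be handling the adaptivity correctly: the laws $a_y, b_y$ of the second mechanism depend on the first mechanism's output $y$, so the argument only closes because $M \preceq (P,Q)$ supplies the same dominating pair \emph{uniformly} over all auxiliary inputs $y$ and all neighbouring datasets — this uniformity is precisely what makes the conditional bound integrable against the first coordinate's law. The remaining work is measure-theoretic bookkeeping around the rescaling identity: the sets where $\tilde a(y)=0$ (or $P(z)=0$) must be treated separately, but there either the positive part vanishes or the rescaled parameter runs off to $+\infty$ with $H_{\infty}=0$, so they contribute nothing. A slicker alternative would invoke Blackwell's theorem (\citet{blackwell}) to recast domination in all hockey-stick divergences as the existence of a post-processing channel from $(P,Q)$ to each conditional pair and then compose channels, but the direct peeling argument keeps the adaptive dependence explicit and avoids appealing to the full equivalence.
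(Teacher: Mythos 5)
Your argument is correct, and it is worth noting that the paper itself states this theorem without proof, importing it from \citet{zhu2022optimalaccountingdifferentialprivacy} (Theorem 10 there); your conditional ``peeling'' argument --- writing $H_\gamma$ of the joint as an iterated integral of positive parts, factoring out the first-stage density to rescale the hockey-stick parameter to $\gamma\,\tilde b(y)/\tilde a(y)$, invoking domination of the adaptive stage uniformly in the auxiliary input $y$, and then repeating on the other coordinate --- is precisely the standard proof of that result, and you correctly identify the crux: the hypothesis must be read as $(P,Q)$ dominating $M(\cdot\,;y)$ for \emph{every} auxiliary input, which the paper's informal statement glosses over. Two cosmetic points. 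First, your integral representation $H_\gamma(P\|Q)=\int(dP-\gamma\,dQ)_+$ has the arguments in the opposite order from the paper's definition $H_\gamma(P\|Q)=\sup_E[Q(E)-\gamma P(E)]$; since your proof is internally consistent and the entire manoeuvre is symmetric under relabelling $P\leftrightarrow Q$, $S\leftrightarrow S'$ (with a symmetric neighbouring relation), this changes nothing, but you should fix the convention to match whichever definition is in force. Second, your edge-case remark that ``the rescaled parameter runs off to $+\infty$ with $H_\infty=0$'' is not true in general --- under your convention $\lim_{\gamma\to\infty}H_\gamma(P\|Q)$ equals the $P$-mass of the set where the $Q$-density vanishes, which is zero only under absolute continuity --- but this branch is never actually needed: on the set $\tilde a(y)=0$ (resp.\ $P(z)=0$) the integrand $(\tilde a(y)a_y(z)-\gamma\tilde b(y)b_y(z))_+$ is already zero, so your first fallback suffices on its own.
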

To convert $H_{e^\varepsilon}(P \otimes \tilde{P} ~\|~ Q \otimes \tilde{Q})$ 
into an efficiently computable form, we consider a notion of privacy loss random variables (PLRVs)~\cite{dwork2016concentrated}. Let $[x]^+ = \max(0,x)$. 
\begin{theorem}[Based on \citet{gopi2021numerical}]
    \label{thm:plrv}
    Given a dominating pair $(P,Q)$ for mechanism $M$, define PLRVs $X = \log \nicefrac{Q(o)}{P(o)}, o \sim P$, and $Y = \log \nicefrac{Q(o)}{P(o)}, o \sim Q$. The mechanism has a privacy profile:
    \begin{equation}\label{eq:app_priv_curve}
    \delta(\varepsilon) = \mathbb{E}_{y \sim Y}\left [ 1 - e^{\varepsilon - y}\right ]^+.
    \end{equation}
    Moreover, if a mechanism $\tilde{M}$ has PLRVs $\tilde{X}, \tilde{Y}$, then $M \circ \tilde{M}$ has PLRVs $X + \tilde{X}, Y + \tilde{Y}$ and privacy profile:
    \begin{equation}
    \delta(\varepsilon) = \mathbb{E}_{y \sim Y + \tilde{Y}}\left [ 1 - e^{\varepsilon - y}\right ]^+.
    \end{equation}
\end{theorem}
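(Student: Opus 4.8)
The plan is to prove both parts by reducing everything to a closed-form evaluation of the hockey-stick divergence of the (product) dominating pair. For the first part, I would begin from the observation that the profile induced by a dominating pair is exactly $\delta(\varepsilon) = H_{e^\varepsilon}(P \| Q)$: since $M \preceq (P,Q)$ means $H_\gamma(M(S) \| M(S')) \le H_\gamma(P \| Q)$ for every $S \simeq S'$ and all $\gamma \ge 0$, taking $\gamma = e^\varepsilon$ shows that $M$ is $(\varepsilon, H_{e^\varepsilon}(P\|Q))$-DP, so $\varepsilon \mapsto H_{e^\varepsilon}(P\|Q)$ is a valid privacy profile. The remaining work is therefore purely to evaluate this divergence in terms of the PLRV $Y$.

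For that evaluation I would use the variational form $H_{e^\varepsilon}(P \| Q) = \sup_{E}\,[Q(E) - e^\varepsilon P(E)]$ and identify the maximizing event by a Neyman--Pearson / pointwise argument: the integrand $q(o) - e^\varepsilon p(o)$ should be included in $E$ exactly where it is positive, so the optimal set is $E^\star = \{o : q(o) > e^\varepsilon p(o)\}$ and $H_{e^\varepsilon}(P\|Q) = \int [q(o) - e^\varepsilon p(o)]^+\,do$. I would then factor $q(o)$ out of the integrand and substitute the likelihood ratio $p(o)/q(o) = e^{-y}$ with $y = \log(q(o)/p(o))$, turning $\int q(o)\,[1 - e^{\varepsilon}\,p(o)/q(o)]^+\,do$ into $\E_{o \sim Q}\,[1 - e^{\varepsilon - Y}]^+ = \E_{y \sim Y}[1 - e^{\varepsilon - y}]^+$, which is the claimed formula. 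Some care is needed for the measure-theoretic edge cases (points where $p$ vanishes, contributing $y = +\infty$, and any mutually singular part), but these only force the bracket to equal its unclipped value and are absorbed by the $[\cdot]^+$.

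For the composition statement I would appeal to the dominating-pair composition theorem stated just above, which gives $M \circ \tilde{M} \preceq (P \otimes \tilde P,\, Q \otimes \tilde Q)$. The key observation is that the log-likelihood ratio of a product measure is additive:
\[
\log \frac{(Q \otimes \tilde Q)(o,\tilde o)}{(P \otimes \tilde P)(o,\tilde o)} = \log \frac{Q(o)}{P(o)} + \log \frac{\tilde Q(\tilde o)}{\tilde P(\tilde o)}.
\]
Drawing $o \sim Q$ and $\tilde o \sim \tilde Q$ independently identifies the PLRV of the product pair as $Y + \tilde Y$ (and analogously $X + \tilde X$ under $P \otimes \tilde P$), with the two summands independent by the product structure. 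Applying the first part to this product dominating pair then yields $\delta(\varepsilon) = \E_{y \sim Y + \tilde Y}[1 - e^{\varepsilon - y}]^+$.

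I expect the main obstacle to be the first evaluation step: rigorously justifying that the threshold event $E^\star$ is optimal (the Neyman--Pearson argument) and carrying out the change of variables from an integral against $q$ to an expectation over $Y$, while correctly tracking the directional convention of $H_{e^\varepsilon}$ and the sign in the exponent $e^{\varepsilon - y}$. The composition half is comparatively routine once the additivity of log-ratios under products is in hand, since it reduces to the already-established single-pair formula.
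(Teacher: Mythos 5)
Your proposal is correct and follows essentially the same route as the paper's source for this result: the paper states the theorem without proof, attributing it to \citet{gopi2021numerical}, and the standard argument there is exactly your two steps --- identifying the profile of a dominating pair as $\delta(\varepsilon) = H_{e^\varepsilon}(P \,\|\, Q)$, evaluating it via the optimal threshold event $\{q > e^\varepsilon p\}$ and the change of variables to the PLRV $Y$, and then reducing composition to the product pair $(P \otimes \tilde{P}, Q \otimes \tilde{Q})$ whose log-likelihood ratio is additive. You also correctly tracked the paper's (reversed) convention $H_\gamma(P \,\|\, Q) = \sup_E [Q(E) - \gamma P(E)]$ and the edge cases where $p$ vanishes, so there is nothing to flag.
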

In other words, PLRVs turn compositions of mechanisms into convolutions of random variables. 
In particular, it is possible to choose $(P, Q)$ in such a way that the composition can be efficiently computed with fast Fourier transform (FTT)~\citep{koskela20b}. This approach yields significantly more precise approximate DP guarantees than R\'enyi-based workflows. 

In summary, if the goal is to find the privacy profile of $M_1 \circ M_2 \circ \ldots \circ M_T$ given that mechanisms $M_i$ have privacy profiles $\delta_i(\varepsilon)$, then the workflow is: (1) compute dominating pairs $(P_i, Q_i)$ to mechanism $M_i$ (we would recommend using the Connect-The-Dots approach of \citet{doroshenko2022connectdotstighterdiscrete}, as it is optimal), (2) compute the PLRVs $(X_i,Y_i)$ for mechanism $M_i$ using \cref{thm:plrv}, (3) Compute the PLRV $Y_T$ of the composed mechanism via $Y_T = \sum_i Y_i$ using the FFT, (4) use \cref{eq:app_priv_curve} to compute the privacy profile of $M_1 \circ M_2 \circ \ldots \circ M_T$. These algorithms compute profiles to accuracy nearly matching the lower bound of a privacy audit where the adversary is free to choose the entire (often pathological) training dataset \cite{nasr2021adversary,nasr2023tight}. Given these results, we treat the analyses of numerical accountants as precise.

\subsection{Hypothesis Testing Interpretation of DP and Numerical Accountants}\label{app:fdp}
A independent line of work reformulated differential privacy in terms of hypothesis tests. Though this connection was pointed out early in DP's history~\cite{wasserman2010statistical}, its full implications were explored much later in \cite{kairouz2015composition, dong2022gaussian}. More important to the discussion in this work, it turns out that privacy profiles as defined in \appref{app:profiles} are closely related to $f$-DP a defined in \cite{dong2022gaussian}. In this section, we define $f$-DP then connect it to privacy profiles. Then, we show that the numerical accountants discussed in \appref{app:profiles} can be used to compute trade-off curves too. We conclude with introducing $\mu$-GDP.

Consider a binary hypothesis test where an adversary observes an outcome $o \in \mathcal{O}$ and their goal is to determine if $o$ came from distribution $P$ or $Q$. This test is completely characterized by the \textit{trade-off function} $T(P,Q): \alpha \rightarrow \beta(\alpha)$, where $(\alpha, \beta(\alpha))$ denote the Type-I/II errors of the most powerful level $\alpha$ test between $P$ and $Q$ with null hypothesis $H_0: o \sim P$  and alternative $H_1: o \sim Q$. Note that $T(P, Q)$ is convex, continuous, non-increasing, and for all $\alpha \in [0, 1]$, $T(P, Q)(\alpha) \leq 1 - \alpha$.

\citet{dong2022gaussian} use these trade-off functions to propose $f$-DP. It turns out that the dominating pairs from \cref{def:domp} are a natural choice to define $f$-DP:
\begin{definition}\label{def:fdp_for_metric}
    A mechanism $M$ is $f$-DP iff there exists $(P,Q)$ where $f = T(P,Q)$ and $M \preceq (P,Q)$. 
\end{definition}
\citet{zhu2022optimalaccountingdifferentialprivacy} showed it is possible to compute a tightly dominating pair $(P^*, Q^*)$ for any mechanism $M$. Thus, any mechanism $M$ has an \textit{associated trade-off curve} $f_M = T(P^*, Q^*)$. That a mechanism satisfies $f$-DP means that $f(\alpha) \leq f_M(\alpha)$ for all $\alpha \in [0, 1]$, and that there exists a pair $(P,Q)$ such that $f(\alpha) = T(P, Q)(\alpha)$~\cite{kulynych2024attack}.
An $f$-DP guarantee is equivalent to a privacy profile: 
\begin{theorem}[\citealp{dong2022gaussian}]
    A mechanism is $f$-DP if and only if it satisfies $(\varepsilon, 1 + f^*(-e^\varepsilon))$-DP for all $\varepsilon\in \mathbb{R}$\footnote{If $f$ is symmetric, only $\varepsilon \geq 0$ is needed.}, where $f^*$ denotes the convex conjugate of $f$. 
\end{theorem}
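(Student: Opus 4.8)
The plan is to reduce the equivalence to a single pointwise identity relating the hockey-stick divergence of one pair of distributions to the convex conjugate of its trade-off function, and then to lift this from pairs to mechanisms using the dominating-pair machinery of \cref{def:domp}. The central first step is to show that for any pair $(P,Q)$ with trade-off function $f = T(P,Q)$ and any $\varepsilon \in \mathbb{R}$,
\begin{equation}
    H_{e^\varepsilon}(P \,\|\, Q) = 1 + f^*(-e^\varepsilon),
\end{equation}
using the convention $H_\gamma(P\|Q) = \sup_E[Q(E) - \gamma P(E)]$ from \appref{app:adp}. To derive this I would unfold the conjugate as $f^*(-e^\varepsilon) = \sup_{\alpha \in [0,1]}\{-e^\varepsilon\alpha - f(\alpha)\}$, so that $1 + f^*(-e^\varepsilon) = \sup_{\alpha}\{1 - f(\alpha) - e^\varepsilon \alpha\}$. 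Writing a test as $\phi:\mathcal{O}\to[0,1]$ with type-I error $\mathbb{E}_P[\phi]$ and type-II error $1 - \mathbb{E}_Q[\phi]$, the definition of the trade-off curve gives $1 - f(\alpha) = \sup_{\phi:\,\mathbb{E}_P[\phi]\le\alpha}\mathbb{E}_Q[\phi]$. Substituting and optimizing the inner supremum by taking $\alpha = \mathbb{E}_P[\phi]$ collapses the double supremum to $\sup_{\phi}\{\mathbb{E}_Q[\phi] - e^\varepsilon\mathbb{E}_P[\phi]\}$, and linearity in $\phi$ means the optimum is attained at an indicator $\phi = \mathbb{1}_E$, yielding exactly $\sup_E[Q(E) - e^\varepsilon P(E)] = H_{e^\varepsilon}(P\|Q)$.

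Given this identity, the forward direction is immediate. If $M$ is $f$-DP then by \cref{def:fdp_for_metric} there is a pair $(P,Q)$ with $f = T(P,Q)$ and $M \preceq (P,Q)$, so for every $S \simeq S'$ and every $\varepsilon$, the dominating-pair inequality of \cref{def:domp} gives $H_{e^\varepsilon}(M(S)\|M(S')) \le H_{e^\varepsilon}(P\|Q) = 1 + f^*(-e^\varepsilon)$, which is precisely the statement that $M$ is $(\varepsilon, 1 + f^*(-e^\varepsilon))$-DP for all $\varepsilon$.

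For the converse I would reconstruct a witnessing dominating pair from the profile. Since a valid trade-off function is convex and lower semicontinuous on $[0,1]$, biconjugation gives $f^{**} = f$, so the family $\{1 + f^*(-e^\varepsilon)\}_{\varepsilon \in \mathbb{R}}$ determines $f$ uniquely; the hockey-stick values and the trade-off curve are two encodings of the same object linked by the Legendre transform. I would then invoke the fact from the $f$-DP framework of \citet{dong2022gaussian} that every valid trade-off function is realized as $T(P,Q)$ by some explicit pair on $[0,1]$, and fix such a pair. The hypothesis that $M$ is $(\varepsilon, 1 + f^*(-e^\varepsilon))$-DP for all $\varepsilon$ reads $H_{e^\varepsilon}(M(S)\|M(S')) \le 1 + f^*(-e^\varepsilon) = H_{e^\varepsilon}(P\|Q)$; applying the first-step identity to $(P,Q)$ shows this holds at every threshold $\gamma = e^\varepsilon$, i.e.\ $M \preceq (P,Q)$, so $M$ is $f$-DP by \cref{def:fdp_for_metric}.

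I expect the main obstacle to lie in the converse, and specifically in two points. First, the first-step identity must hold with equality (not merely inequality) even when $f$ has flat pieces or kinks, which requires handling the Neyman--Pearson optimal tests and boundary randomization carefully; the convex-duality argument above is designed to avoid appealing to differentiability, but the interchange of suprema and the attainment at indicators should be justified rigorously. Second, one must check that ranging $\varepsilon$ over all of $\mathbb{R}$ (equivalently $\gamma = e^\varepsilon$ over all of $(0,\infty)$) recovers the entire trade-off function, covering the regimes governing both the $H_0$ and $H_1$ error behaviors; the footnoted reduction to $\varepsilon \ge 0$ for symmetric $f$ then follows from the reflection symmetry $T(P,Q) \leftrightarrow T(Q,P)$ of the underlying test.
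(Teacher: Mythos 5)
Your proposal is correct, but a point of order first: the paper contains no proof of this statement at all---it is imported verbatim from \citet{dong2022gaussian} (their Proposition 2.12, stated there for symmetric $f$ and $\varepsilon \geq 0$), so the comparison is with the cited source rather than with an in-paper argument. Your central identity $H_{e^\varepsilon}(P \,\|\, Q) = 1 + f^*(-e^\varepsilon)$ for $f = T(P,Q)$ is right, and your derivation---unfolding the conjugate, collapsing the double supremum at $\alpha = \E_{P}[\phi]$, and passing from tests to indicator sets by linearity---is sound; moreover, the obstacle you flag about kinks and flat pieces dissolves on inspection, since $1 - f(\alpha) = \sup\{\E_Q[\phi] \mid \E_P[\phi] \leq \alpha\}$ holds by definition of the infimum with no attainment or Neyman--Pearson regularity needed. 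Dong et al.\ run the same convex duality in a different costume: they characterize $f$-DP as $T(M(S), M(S')) \geq f_{\varepsilon, \delta(\varepsilon)}$ for every $\varepsilon$ and use the fact that a closed convex trade-off function is the upper envelope of these supporting piecewise-linear curves (the biconjugation you mention only as a side remark); that envelope view is exactly the dual statement recorded in the main text as \cref{cor:profile_to_f}. Your hockey-stick/dominating-pair route buys tighter alignment with this paper's accountant-centric framing (\cref{def:domp,def:fdp_for_metric}), at the price of invoking realizability of every valid trade-off function as $T(P,Q)$, which you should cite explicitly (Proposition 2.2 of \citealp{dong2022gaussian}).

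Two small repairs. First, \cref{def:domp} quantifies over all $\gamma \geq 0$, while $\gamma = e^\varepsilon$ only sweeps $(0, \infty)$; to conclude $M \preceq (P,Q)$ you should note that $\gamma = 0$ is trivially dominated, since $H_0(A \,\|\, B) = \sup_E B(E) = 1$ for every pair under the paper's convention. Second, the biconjugation paragraph opening your converse (``the family determines $f$ uniquely'') does no work in your actual argument---realizability of $(P,Q)$ plus the first-step identity already closes the loop---so it can be cut without loss.
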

Note that all the previously discussed privacy definitions---$(\varepsilon, \delta)$-DP, R\'enyi DP, zCDP---imply both a trade-off curve and a privacy profile, which we detail in \appref{app:delta-divergence}. 

The numerical accountants from \appref{app:profiles} can be used to compute trade-off functions under composition:
\begin{theorem}[\citealp{kulynych2024attack}]\label{thm:fdp}
    Let $(P,Q)$ be a dominating pair for a mechanism $M$ and $(X,Y)$ be the associated PLRVs as defined in \cref{thm:plrv}. Suppose the PLRVs share the same finite support $\Omega = \{\omega_0, \ldots, \omega_k\}$. Then, $T(P,Q)$ is piecewise linear with breakpoints $\{ \Pr[X > \omega_i], \Pr[Y \leq \omega_i] \}_{i = 0}^k$. 
\end{theorem}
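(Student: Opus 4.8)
The plan is to derive piecewise linearity directly from the Neyman--Pearson lemma, exploiting the fact that both PLRVs are the \emph{same} log-likelihood-ratio statistic observed under two different measures. Write $L(o) \define \log \frac{Q(o)}{P(o)}$ for the log-likelihood ratio. By the definition of the PLRVs in \cref{thm:plrv}, $X$ is the law of $L(o)$ under $o \sim P$ and $Y$ is its law under $o \sim Q$, so the common finite support $\Omega = \{\omega_0, \ldots, \omega_k\}$ (taken ordered, $\omega_0 < \cdots < \omega_k$) is exactly the set of values that $L$ can take. First I would recall that $T(P,Q)(\alpha)$ is the smallest Type-II error over all level-$\alpha$ tests of $H_0: o \sim P$ against $H_1: o \sim Q$, and that by Neyman--Pearson this infimum is attained by a likelihood-ratio test: reject $H_0$ whenever $L(o)$ exceeds a threshold, with independent randomization permitted on the event that $L(o)$ equals the threshold. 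Because $L$ is supported on $\Omega$, it suffices to place the threshold at the atoms $\omega_i$.

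Next I would evaluate the error pair of each deterministic threshold test ``reject iff $L(o) > \omega_i$''. Its Type-I error is $\alpha_i = \Pr_{o \sim P}[L(o) > \omega_i] = \Pr[X > \omega_i]$ and its Type-II error is $\beta_i = \Pr_{o \sim Q}[L(o) \leq \omega_i] = \Pr[Y \leq \omega_i]$. These are precisely the claimed breakpoints $\{(\Pr[X > \omega_i], \Pr[Y \leq \omega_i])\}_{i=0}^k$; the extreme threshold $\omega_k$ yields the endpoint $(0,1)$, and together with the trivial test that rejects everything (which gives $(1,0)$) these are the candidate vertices of the curve.

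Finally I would fill in the values of $\alpha$ strictly between two consecutive vertices. For such $\alpha$ the Neyman--Pearson optimal test rejects deterministically whenever $L(o) > \omega_i$ and rejects with probability $\gamma \in [0,1]$ when $L(o) = \omega_i$, with $\gamma$ chosen so that the size equals $\alpha$. Then $\alpha = \Pr[X > \omega_i] + \gamma \Pr[X = \omega_i]$ and $\beta = \Pr[Y \leq \omega_i] - \gamma \Pr[Y = \omega_i]$ are both affine in $\gamma$; eliminating $\gamma$ shows $\beta$ is affine in $\alpha$ on this interval, so the curve is the straight segment joining the two adjacent vertices. Since these randomized threshold tests are optimal by Neyman--Pearson, $T(P,Q)$ \emph{equals} this interpolation rather than lying strictly below it, establishing piecewise linearity with the stated breakpoints.

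The main obstacle is the careful treatment of the atoms of $L$: when several outcomes $o$ share a common value $\omega_i$ of the likelihood ratio, the randomization must act on the entire atom, and one must verify that the resulting error pair interpolates linearly and that no non-threshold test does better, which is exactly the content of Neyman--Pearson combined with convexity of $T(P,Q)$. A secondary bookkeeping point is fixing the orientation and indexing so that the listed breakpoints, augmented by the trivial endpoint $(1,0)$, trace the curve monotonically from $(1,0)$ to $(0,1)$.
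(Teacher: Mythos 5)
Your proposal is correct and follows essentially the same route as the paper's source for this result: the paper itself defers the proof to \citet{kulynych2024attack}, but its Algorithm~\ref{alg:get-beta} encodes exactly your construction---a randomized Neyman--Pearson threshold test at the atom $x_t$ with randomization parameter $\gamma = \bigl(\alpha - \Pr[X > x_t]\bigr)/\Pr[X = x_t]$ and $f(\alpha) = \Pr[Y \leq x_t] - \gamma \Pr[Y = x_t]$, which is precisely your linear interpolation between the vertices $\bigl(\Pr[X > \omega_i], \Pr[Y \leq \omega_i]\bigr)$. Your handling of the atoms via randomization and the endpoint bookkeeping matches the cited argument, so there is nothing to fix.
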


We copy \cref{alg:get-beta} from \citet{kulynych2024attack} for completeness. This algorithm simply implements the steps outlined in \cref{thm:fdp}. We remark that the PLRVs $(X, Y)$ can always be chosen to have the same finite support, and \citet{doroshenko2022connectdotstighterdiscrete} provided the optimal algorithm for how to construct these PLRVs. 

\begin{algorithm}[t]
\caption{Compute $f_{(X, Y)}(\alpha)$ for discrete privacy loss random variables $(X, Y)$~\cite{kulynych2024attack}}
\label{alg:get-beta}
\begin{algorithmic}[1]
\REQUIRE PMF $\Pr[X = x_i]$ over grid $\{x_1, x_2, \ldots, x_k\}$ with $x_1 < x_2 < \ldots < x_k$
\REQUIRE PMF $\Pr[Y = y_j]$ over grid $\{y_1, y_2, \ldots, y_l\}$ with $y_1 < y_2 < \ldots < y_l$
\STATE \(t \leftarrow \min \{i \in \{0, 1, \ldots, k\} \mid \Pr[X > x_{i}] \leq \alpha \}, \text{ where } x_0 \define -\infty \)
\STATE \(\gamma \leftarrow \frac{\alpha - \Pr[X > x_{t}]}{\Pr[X = x_{t}]}\)
\STATE $f(\alpha) \leftarrow \Pr[Y \leq x_{t}] - \gamma \Pr[Y = x_{t}]$
\end{algorithmic}
\end{algorithm}

\paragraph{From $f$-DP to Operational Privacy Risk}
Assuming that the neighbouring relation $S \simeq S'$ is such that the datasets differ by a single record, i.e. $S' = \{S \cup z\}$ for some $z$, the hypothesis testing setup described previously can also be seen as a membership inference attack (MIA)~\cite{shokri2017membership} on the sample $z$. In this framework, the adversary aims to determine if a given output $\theta \in \Theta$ came from $M(S)$ or $M(S')$ for some neighbouring datasets $S \simeq S'$. Such an attack is equivalent to a binary hypothesis test~\cite{wasserman2010statistical, kairouz2015composition, dong2022gaussian}:
\begin{align}
H_0: \theta \sim M(S), \quad H_1: \theta \sim M(S'),
\end{align}
where the MIA is modelled as a test $\phi: \Theta \rightarrow [0,1]$ which associates a given output $\theta$ to the probability of the null hypothesis $H_0$ being rejected. We can analyze this hypothesis test through the trade-off between the attainable \emph{false positive rate} (FPR) $\alpha_\phi \define \smash{\E_{\theta \sim M(S)}}[\phi(\theta)]$ and \emph{false negative rate} (FNR) $\beta_\phi \define 1 - \smash{\E_{\theta \sim M(S')}}[\phi(\theta)]$. This trade-off function is the same as defined before, except here we have the extra intuition that the goal of the adversary is to identify one particular member $z$ in the dataset. We note that a function $f: [0,1] \rightarrow [0,1]$ is a trade-off function iff $f$ is convex, continuous, non-increasing, and $f(x) \leq 1 - x$ for $x \in [0,1]$. We denote the set of functions with these properties by $\mathcal{F}$. We can now state the more standard $f$-DP definition:

\begin{definition}[\citealp{dong2022gaussian}]\label{def:def_of_fdp}
A mechanism $M$ satisfies $f$-DP, where $f \in \mathcal{F}$, if for all $\alpha \in [0,1]$, we have
$\inf_{S \simeq S'} T(M(S), M(S'))(\alpha) \geq f(\alpha)$.
\end{definition}
This is the standard definition of $f$-DP, though we presented it earlier using dominating pairs to make the connection to numerical accountants clear. However, the standard approach makes it clear that $\beta = f(\alpha)$ can be interpreted as FNR of the worst-case strong-adversary membership inference attack with FPR $\alpha$~\cite{nasr2021adversary}. Moreover, it also tightly bounds other notions of attack risk such as maximum accuracy of attribute inference or reconstruction attacks~\cite{kaissis2023bounding,hayes2024bounding,kulynych2025unifying}.

\subsection{Gaussian Differential Privacy}\label{app:gdp}
Gaussian Differential Privacy (GDP) is a special case of $f$-DP which conveniently characterizes common private mechanisms based on the Gaussian mechanism:
\begin{definition}[\citealp{dong2022gaussian}]
    A mechanism $M(\cdot)$ satisfies $\mu$-GDP iff it is $f_\mu$-DP with:
    \begin{equation}\label{eq:gdp-app}
        f_\mu(\alpha) = \Phi(\Phi^{-1}(1- \alpha) - \mu),
    \end{equation}
    where $\Phi$ denotes the CDF and $\Phi^{-1}$ the quantile function of the standard normal distribution. 
\end{definition}

The introduction of $\mu$-GDP due to \citet{dong2022gaussian} received a mixed response from the community. One of its key observations was that any privacy definition framed through a hypothesis testing approach to ``indistinguishability'' will, under composition, converge to the guarantees of Gaussian Differential Privacy (GDP). The proof of this convergence established a uniform convergence in the trade-off function to a Gaussian trade-off function in the limit as the number of compositions went to infinity (see, e.g. Theorem 5.2 in \cite{dong2022gaussian}). Most importantly, it was unclear whether this $\mu$-GDP asymptotic lower-bounded the trade-off function (in which case, the $\mu$-GDP asymptotic yielded a valid $f$-DP guarantee) or upper-bounded the trade-off function (in which case the asymptotic is not valid privacy guarantee), when applied to algorithms with finite compositions.

A notable follow-up study applied $\mu$-GDP to the analysis of DP-SGD~\cite{Bu2020Deep}, and derived an asymptotic closed-form expression for $\mu$ in the limit as compositions tends to infinity. Unfortunately, this expression was shown to lower-bound the privacy profile by \citet{gopi2021numerical}, which equivalently meant that the asymptotic $\mu$-GDP trade-off function upper-bounded the true underlying trade-off function. The core issue here lies in the fact that although privacy amplification through Poisson subsampling can be tightly captured by $f$-DP, the resulting trade-off curve deviates from a Gaussian form. This deviation complicates the theoretical analysis of $\mu$-GDP for subsampled mechanisms.

Given these challenges, it may seem surprising that we advocate for $\mu$-GDP in machine learning applications. However, our approach addresses two critical differences that set it apart from the previous approaches:

\begin{enumerate}
    \item \textbf{No reliance on asymptotic expressions:} Rather than using approximations for $\mu$, we compute the trade-off curve \emph{numerically} using existing accountants. We then perform a post-hoc optimization to find the tightest $\mu$ ensuring the mechanism adheres to $\mu$-GDP. 
    \item \textbf{Freedom from distributional assumptions:} As our method avoids asymptotic approximations, it does not require any specific assumptions about the moments or the underlying distribution of the privacy loss random variable, which are of central importance in asymptotic approximations.
    \item \textbf{Correctness guarantee:} Our method ensures that the obtained $\mu$-GDP guarantee is pessimistic, i.e., does not overestimate the privacy protection.
\end{enumerate}

\subsection{Associated Trade-off Curves}
\label{app:delta-divergence}
\label{app:trade-off-curves-repr}

Each of the privacy definitions discussed before has an associated trade-off curve, which we provide for reference next.

\paragraph{ADP} If a mechanism satisfies $(\varepsilon, \delta)$-DP, it satisfies $f_{(\varepsilon, \delta)}$-DP~\cite{dong2022gaussian}:
    \[
        f_{\varepsilon, \delta}(x) = \max\{0, 1 - e^\varepsilon x - \delta, e^{-\varepsilon} (1 - x - \delta)\}.
    \]

\paragraph{GDP} If a mechanism satisfies $\mu$-GDP, then by definition it satisfies $f_\mu$-DP where 
    \[
    f_\mu(\alpha) = \Phi( \Phi^{-1}(1 - \alpha) - \mu).
    \]

\paragraph{RDP} If a mechanism satisfies $(t, \varepsilon)$-RDP, it satisfies $f_{(t, \varepsilon)}$-DP, where $\beta = f_{(t, \varepsilon)}(\alpha)$ is defined by the following inequalities~\cite{balle2020hypothesis,asoodehconversion,zhu2022optimalaccountingdifferentialprivacy}, for $t > 1$:
    \[
        \begin{aligned}
        (1 - \beta)^t \alpha^{1 - t} + \beta^t (1 - \alpha)^{1 - t} \leq  e^{(t - 1) \varepsilon} \\
        (1 - \alpha)^t \beta^{1 - t} + \alpha^t (1 - \beta)^{1 - t} \leq e^{(t - 1) \varepsilon},
        \end{aligned}
    \]
    for $t \in [\nicefrac{1}{2}, 1)$:
    \[
        \begin{aligned}
        (1 - \beta)^t \alpha^{1 - t} + \beta^t (1 - \alpha)^{1 - t} \geq  e^{(t - 1) \varepsilon} \\
        (1 - \alpha)^t \beta^{1 - t} + \alpha^t (1 - \beta)^{1 - t} \geq e^{(t - 1) \varepsilon},
        \end{aligned}
    \]
    and for $t = 1$:
    \[
        \begin{aligned}
        \alpha \log\left(\frac{\alpha}{1 - \beta}\right) + (1 - \alpha) \log\left(\frac{1-\alpha}{\beta}\right) \leq \varepsilon \\
        \beta \log\left(\frac{\beta}{1 - \alpha}\right) + (1 - \beta) \log\left(\frac{1-\beta}{\alpha}\right) \leq \varepsilon \\
        \end{aligned}
    \]

If a mechanism satisfies a continuum of $(t, \varepsilon(t))$-RDP guarantees, then the trade-off function $\beta = f_{(t, \varepsilon(t))}(\alpha)$ can be obtained by running the above for fixed alpha over the collection of $(t, \varepsilon(t))$-RDP guarantees, then taking the minimum over the resulting $\beta$. 

\paragraph{zCDP} If a mechanism satisfies $\rho$-zCDP, we can set $\varepsilon(t) = \rho t$ and use the previous result for a continuum of RDP guarentees to get the trade-off function for zCDP as a special case. No known closed-form expressions for this trade-off function are known.  

\section{MIA Success Bounds against a GDP mechanism}\label{app:gaussian_app}

In this section, we provide further intuition for the connection between privacy profiles and trade-off functions. In \cref{fig:mia-bounds-app}, the top figure shows the profile $(\varepsilon, \delta(\varepsilon))$ profile of a DP algorithm calibrated for $\varepsilon=8, \delta=10^{-5}$ (big blue dot).
The profile is based on Gaussian differential privacy, which accurately models the privacy of many common DP algorithms as illustrated in \Cref{sec:applicability-gdp}.
The bottom figure shows the membership inference attack~\cite{shokri2017membership} success bounds (maximum true positive rate, \text{tpr}, at fixed false positive rate, \text{fpr}) for the same DP algorithm.
The thin blue curve corresponding to bounds for $\varepsilon=8, \delta=10^{-5}$ significantly underestimates the protection.
The optimal bound (thick curve) is formed as a lower envelope of curves for different $\delta \in [0, 1]$, some of which are shown as dashed lines. The points corresponding to these curves are shown as orange dots in the top plot.

\begin{figure}[t]
    \centering
    \includegraphics{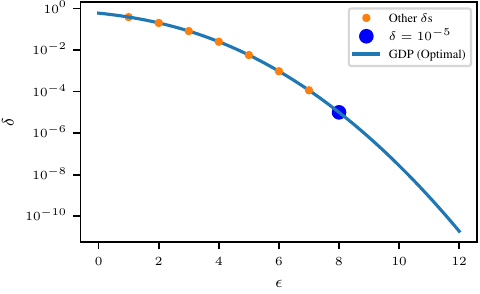}\\
    \includegraphics{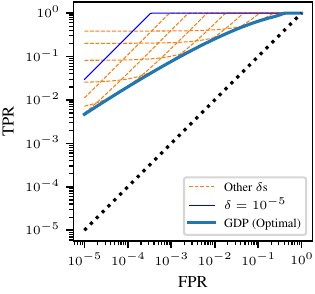}
    \caption{Top: Full privacy profile of GDP mechanism. Bottom: Corresponding membership inference attack (MIA) success bounds on a log-log ROC plot commonly used in MIA literature.}
    \label{fig:mia-bounds-app}
\end{figure}

\section{Omitted Technical Details}\label{app:tech_details}

\subsection{Details on Tight GDP Accounting}\label{app:gdp_details}
Recall that we seek to find the solution to the following problem:
\begin{equation}
    \mu^* = \inf\{\mu' \geq 0 \mid \forall \alpha: \: \: f_{\mu'}(\alpha) \leq f(\alpha)\}. \label{eq:amu}
\end{equation}

By \cref{thm:fdp} and \cref{alg:get-beta}, we can obtain the breakpoints of the piecewise linear trade-off curves using the state-of-the-art \citet{doroshenko2022connectdotstighterdiscrete} accounting. First, we show that the infinimum can be taken over just these finite breakpoints, and all other $\alpha$ can be ignored:
\begin{proposition}\label{thm:afastmu}
    Given a piecewise-linear trade-off curve $f$ with breakpoints $\{\alpha_i\}_{i=1}^k$ from \cref{thm:fdp}, we have:
    \begin{equation}
        \mu^* = \inf\{\mu' \geq 0 \mid \forall i \in [k]:\: \: f_{\mu'}(\alpha_i) \leq \beta_i\}. \label{eq:afastmu}
    \end{equation}
    \proof Follows from the convexity and monotonicity of $f$ and $f_\mu$ for all $\mu \geq 0$. \qed  
\end{proposition}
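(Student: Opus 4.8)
The plan is to prove the statement by showing that the two optimization problems in \cref{eq:amu} and \cref{eq:afastmu} have identical feasible sets, after which the equality of their infima is immediate. Concretely, set
\[
A = \{\mu \geq 0 \mid \forall \alpha \in [0,1]:\: f_\mu(\alpha) \leq f(\alpha)\},
\qquad
B = \{\mu \geq 0 \mid \forall i \in [k]:\: f_\mu(\alpha_i) \leq \beta_i\},
\]
so that $\mu^* = \inf A$ and the right-hand side of \cref{eq:afastmu} is $\inf B$. The inclusion $A \subseteq B$ is trivial, since each breakpoint $\alpha_i$ lies in $[0,1]$ and $\beta_i = f(\alpha_i)$. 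The entire content of the proposition is the reverse inclusion $B \subseteq A$.

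For the reverse inclusion I would fix $\mu \in B$ and an arbitrary $\alpha \in [0,1]$ and argue that $f_\mu(\alpha) \leq f(\alpha)$. Since the breakpoints of the piecewise-linear curve from \cref{thm:fdp} partition $[0,1]$, the point $\alpha$ lies in some segment $[\alpha_i, \alpha_{i+1}]$, so write $\alpha = \lambda \alpha_i + (1-\lambda)\alpha_{i+1}$ for some $\lambda \in [0,1]$. On this segment $f$ is affine, hence $f(\alpha) = \lambda \beta_i + (1-\lambda)\beta_{i+1}$. The engine of the argument is that $f_\mu$ is convex for every $\mu \geq 0$, being a valid (Gaussian) trade-off curve, so it lies below its own chord:
\[
f_\mu(\alpha) \;\leq\; \lambda f_\mu(\alpha_i) + (1-\lambda) f_\mu(\alpha_{i+1}).
\]
Because $\mu \in B$ gives $f_\mu(\alpha_i) \leq \beta_i$ and $f_\mu(\alpha_{i+1}) \leq \beta_{i+1}$, combining with linearity of $f$ yields $f_\mu(\alpha) \leq \lambda \beta_i + (1-\lambda)\beta_{i+1} = f(\alpha)$. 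As $\alpha$ was arbitrary, $\mu \in A$. Equivalently, one can phrase this as: the gap $g = f_\mu - f$ is convex on each segment (convex minus affine), so it attains its maximum over the segment at an endpoint, and therefore $\sup_\alpha g(\alpha) = \max_i g(\alpha_i)$; nonpositivity at the breakpoints thus forces nonpositivity everywhere. This is presumably what the authors mean by invoking convexity and monotonicity.

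There is no substantial obstacle here, as the result is essentially a one-line consequence of convexity; the only point demanding care is the bookkeeping at the endpoints. One must ensure the breakpoint set genuinely covers all of $[0,1]$, including $\alpha = 0$ and $\alpha = 1$, so that every $\alpha$ falls inside a segment. This is automatic for the curves of \cref{thm:fdp}, and it also correctly encodes the catastrophic-failure case: since $f_\mu(0) = \Phi(\Phi^{-1}(1) - \mu) = 1$, the constraint at $\alpha = 0$ reads $1 \leq f(0)$, which is infeasible whenever $f(0) < 1$, making both $A$ and $B$ empty and $\inf A = \inf B = +\infty$ in agreement with \cref{stmt:adp-to-gdp}. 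Once $A = B$ is established, $\mu^* = \inf A = \inf B$ follows immediately, completing the proof.
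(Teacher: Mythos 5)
Your proof is correct and is precisely the argument the paper compresses into its one-line remark ``follows from the convexity and monotonicity of $f$ and $f_\mu$'': you reduce checking $f_\mu \leq f$ on all of $[0,1]$ to the breakpoints by using convexity of $f_\mu$ (the chord bound) against the affineness of $f$ on each segment. Your additional bookkeeping---verifying the breakpoints cover $\alpha = 0$ and $\alpha = 1$, and noting the infeasible case $f(0) < 1$ where both feasible sets are empty, consistent with \cref{stmt:adp-to-gdp}---is a sound and slightly more careful rendering of the same approach (indeed, your argument shows convexity alone suffices).
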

Moreover, \cref{eq:afastmu} can be easily inverted using the formula for $f_\mu$ to solve for $\mu$. We have that \cref{eq:afastmu} is equivalent to 
\begin{equation}
        \mu^* = \inf\{\mu' \geq 0 \mid \forall i \in [k]:\: \: \mu' \geq \Phi^{-1}(1-\alpha_i) -\Phi^{-1}(\beta_i)\}. \label{eq:afastermu}
    \end{equation}
which has the solution:
\begin{equation}
    \mu^* = \max_{i\in [k]} \{\Phi^{-1}(1-\alpha_i) -\Phi^{-1}(\beta_i) \}. \label{eq:amufinal}
\end{equation}

To quantify the goodness-of-fit of the $\mu^*$-GDP guarantee, we employ the symmetrized metric in \cref{sec:metric}, namely $\Delta^{\leftrightarrow}(f, f_{\mu^*})$. First, we observe that the symmetrization is not necessary in our scenario, as $f_{\mu^*}(\alpha) \leq f(\alpha)$ for $\alpha \in [0,1]$. 
\begin{restatable}{proposition}{symmequality}\label{prop:symm}
    Given a trade-off curve $f$ and $\mu^*$ obtained via \cref{eq:mu}, we have:
    \begin{equation}
        \Delta^{\leftrightarrow}(f, f_{\mu^*}) = \Delta(f, f_{\mu^*}).
    \end{equation}
\end{restatable}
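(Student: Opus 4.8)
The plan is to reduce the claimed equality to showing that one of the two directions of the symmetrized divergence vanishes. By the definition in \cref{eq:symm-delta-div}, $\Delta^{\leftrightarrow}(f, f_{\mu^*}) = \max\{\Delta(f, f_{\mu^*}), \Delta(f_{\mu^*}, f)\}$, and since $\Delta(f, f_{\mu^*}) \geq 0$ always holds, it suffices to prove that $\Delta(f_{\mu^*}, f) = 0$. Unwinding \cref{eq:delta-div}, this amounts to verifying that $\kappa = 0$ lies in the feasible set defining $\Delta(f_{\mu^*}, f)$, i.e., that $f_{\mu^*}(\alpha) \leq f(\alpha)$ for all $\alpha \in [0,1]$.

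The crux is therefore establishing that the infimum in \cref{eq:mu} is in fact attained, so that $f_{\mu^*}$ itself (and not merely $f_{\mu}$ for $\mu > \mu^*$) satisfies the pointwise bound $f_{\mu^*} \leq f$. First I would record two properties of the Gaussian trade-off family: for each fixed $\alpha$, the map $\mu \mapsto f_\mu(\alpha) = \Phi(\Phi^{-1}(1 - \alpha) - \mu)$ is continuous and non-increasing, since $\Phi$ is continuous and increasing. Using these, I would show that the feasible set $\{\mu \geq 0 \mid \forall \alpha, f_\mu(\alpha) \leq f(\alpha)\}$ is closed: for any sequence $\mu_n \downarrow \mu^*$ of feasible points, continuity in $\mu$ gives $f_{\mu^*}(\alpha) = \lim_n f_{\mu_n}(\alpha) \leq f(\alpha)$ for every $\alpha$, so $\mu^*$ is itself feasible and the infimum is a minimum.

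With the pointwise bound $f_{\mu^*}(\alpha) \leq f(\alpha)$ in hand, plugging $\kappa = 0$ into the feasibility condition of \cref{eq:delta-div} for $\Delta(f_{\mu^*}, f)$ gives $f_{\mu^*}(\alpha + 0) - 0 = f_{\mu^*}(\alpha) \leq f(\alpha)$ for all $\alpha$, so $0$ is feasible and hence $\Delta(f_{\mu^*}, f) = 0$. Substituting back yields $\Delta^{\leftrightarrow}(f, f_{\mu^*}) = \max\{\Delta(f, f_{\mu^*}), 0\} = \Delta(f, f_{\mu^*})$, as claimed.

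I expect the only genuine subtlety to be the attainment argument in the second paragraph; everything else is a direct substitution into the definitions. One could alternatively sidestep attainment entirely by invoking the observation already made in the surrounding text that $f_{\mu^*}(\alpha) \leq f(\alpha)$ holds for $\alpha \in [0,1]$, in which case the proof collapses to the single remark that $\kappa = 0$ is feasible for $\Delta(f_{\mu^*}, f)$.
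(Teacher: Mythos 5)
Your proposal is correct and follows essentially the same route as the paper: the paper's proof is the one-line observation that $f_{\mu^*}(\alpha) \leq f(\alpha)$ for all $\alpha \in [0,1]$ (stated in the surrounding text), which makes $\kappa = 0$ feasible for $\Delta(f_{\mu^*}, f)$ and hence forces $\Delta(f_{\mu^*}, f) = 0$. Your second paragraph merely fills in a detail the paper leaves implicit---that the infimum in \cref{eq:mu} is attained, via continuity and monotonicity of $\mu \mapsto f_\mu(\alpha)$---which is a worthwhile (and correct) addition but not a different argument.
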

Moreover, we can compute it as follows:
\begin{equation}\label{eq:symm_final}
    \begin{aligned}
        & \Delta^{\leftrightarrow}(f, f_{\mu^*}) = \Delta(f, f_{\mu^*}) =\\
        & = \inf \{ \kappa \geq 0 ~| \: \:\forall \alpha \in [0,1] : f(\alpha + \kappa) - \kappa \leq f_{\mu^*}(\alpha)\}.
    \end{aligned}
\end{equation}
\begin{proof}
As $f_{\mu^*}(\alpha) \leq f(\alpha)$ for $\alpha \in [0,1]$, it follows that $\Delta( f_{\mu^*}, f) = 0$.
\end{proof}
Moreover, this result holds for any pessimistic DP bound. The optimization problem in \cref{eq:symm_final} can be numerically solved using binary search as follows. We begin with a simple observation: at $\kappa = 1$, the inequality is trivially true since $f$ is bounded between 0 and 1, so the LHS of the inequality is always negative and the RHS is always positive. At $\kappa =0$, the inequality is not true by assumption. It follows that the set $\{ \kappa \geq 0 | \: \:\forall \ : \alpha \in [0,1]: f(\alpha + \kappa) - \kappa \leq f_{\mu}(\alpha)\}$ has the form $\{\kappa: \kappa_\text{min} \leq \kappa\}$ for some value $\kappa_\text{min}$. The most straightforward way to solve for $\kappa_\text{min}$ is via a simple binary search over $\kappa \in [0,1]$ over a sufficiently dense grid for $\alpha$. In each iteration, we narrow down the interval $[\kappa_L, \kappa_R]$ up to a prespecified tolerance $\mathsf{tol}$. Once the desired tolerance is achieved, we return $\kappa_L$ to guarantee that we underestimate $\Delta$.

\subsection{Details on Tight RDP Accounting}\label{app:rdp_details}
Let $\tilde f$ denote a pessimistic lower bound to the trade-off function of some underlying mechanism $f$. In \cref{prop:symm}, we showed that the metric $\Delta^\leftrightarrow(f,\tilde f)$, which denotes the regret in choosing to use the pessimistic lower bound $\tilde f$ over the trade-off function $f$, can be computed as: $$\Delta^\leftrightarrow(f,\tilde f)=\inf \{ \kappa \geq 0 ~| \: \:\forall \alpha \in [0,1] : f(\alpha + \kappa) - \kappa \leq f_{\mu^*}(\alpha)\}.$$ 
This form is useful if we have a trade-off function for the underlying mechanism and for the pessimistic DP bound. In the case of bounds for R\'enyi DP, the optimal trade-off functions are known and are detailed in \appref{app:trade-off-curves-repr}. In practice, however, we found these expressions to be both numerically unstable and very time-consuming to work with. The idea behind this position paper is to point out that there are numerically stable and quick ways to determine how tight a given bound is to a fixed mechanism. We found the $f$-DP bounds in \appref{app:trade-off-curves-repr} to run counter to this message, as computing $\Delta^\leftrightarrow(f,\tilde f)$ once requires solving possibly hundreds of convex optimization problems. We circumvent this problem by pointing out that the metric $\Delta^\leftrightarrow(f,\tilde f)$ can also be expressed as a function between privacy profiles. 

\begin{definition}[\citealp{kaissis2024beyond}]\label{def:metric_pc}
The metric in \cref{def:metric} can also be expressed as a function between two privacy profiles. Given two mechanisms $M , \tilde M$ with privacy profiles  $\delta(\varepsilon), \tilde{\delta}(\varepsilon)$, the $\Delta$-divergence from $M$ to $\tilde M$ is:
\begin{equation}\label{eq:delta-div-pc}
    \Delta(\delta, \tilde{\delta}) \define \inf \{ \kappa \geq 0 ~\mid~ \forall \varepsilon: \:\delta(\varepsilon) + \kappa \cdot (1 + e^{\varepsilon})  \geq \tilde{\delta}(\varepsilon)\}.
\end{equation}
\end{definition}
In the context of RDP, this expression is much more convenient to work with, as the privacy profile implied by an RDP guarantee has been the subject of many previous works \cite{mironov2017renyi, mironov2019renyidifferentialprivacysampled, canonne2020discrete, asoodehconversion, balle2020hypothesis}. While the optimal conversion from a RDP guarantee to a privacy profile is known \cite{asoodehconversion}, this conversion requires solving a convex optimization problem, and there are closed-form upper-bounds that are considerably cheaper to compute \cite{canonne2020discrete} and reasonably close to optimal in the regimes of interest. 

\cref{def:metric_pc} is hence how we computed the regret in \cref{fig:metric_table}, as it allowed us to take advantage of this rich literature. The privacy curve for RDP was calculated using the open-source \texttt{dp\_accounting} library, in particular their RDP implementation in Python. 

Going into more detail, to compute the privacy curve implied by a single $(t, \varepsilon)$-RDP pair, the conversion due to \citet{canonne2020discrete}, Proposition 12 in v4 was used. To obtain the privacy curve implied by a continuum of RDP guarantees $(t, \varepsilon(t))$, we computed a grid of $(t, \varepsilon(t))$ guarantees over a grid of $t$, computed the privacy curve for each pair, and took the minimum across all privacy curves. 

\subsection{Details on Tight zCDP Accounting for DP-SGD}\label{app:zcdp_details}

Given that we took advantage of high precision numerical accountants to compute non-asymtotic $\mu$-GDP bounds for DP-SGD, it is only fair to benchmark against $\rho$-zCDP when an equal amount of numerics are applied. In the context of DP-SGD, the exact R\'enyi divergence $\varepsilon(t)$ can be computed to arbitrary precision using the technique due to \citet{mironov2019renyidifferentialprivacysampled}. Given the R\'enyi divergence, it is straightforward to compute a tight $\rho$-zCDP guarantee in a manner very similar to how we computed a tight $\mu$-GDP guarantee from a trade-off function in \appref{app:gdp_details}. In particular: we seek to find the solution to the following problem:
\begin{equation}
    \rho^* = \inf\{\rho \geq 0 \mid \forall t > 1: \: \: \varepsilon(t) \leq t \cdot \rho\}. \label{eq:arho}
\end{equation}
Unlike \cref{thm:afastmu}, there is no additional structure to take advantage of here, but we can nevertheless numerically solve \cref{eq:arho} by fixing a fie grid of $(t, \varepsilon(t))$ guarantees over a grid of $t$, and numerically solving for $\rho^*$ via binary search. Once we have $\rho$, we apply the technique outlined in \appref{app:rdp_details} to obtain the $\rho$-zCDP privacy curve. Note that, by construction, the regret in choosing $\rho$-zCDP must be higher than using the R\'enyi divergence function. This is indeed the case in \cref{fig:metric_table}.

\section{Choice of Metric and Why $10^{-2}$?}\label{app:metric}
In this section, we overview the metric used to quantify our goodness of fit to a $\mu$-GDP guarantee, and justify our suggestion for the metric being less than $10^{-2}$. This overview is largely based on the results by  \citet{kaissis2024beyond}, restated in the notation used throughout this work. The relevant background is in Appendices \ref{app:profiles} and \ref{app:fdp}.

From the sentences following \cref{def:fdp_for_metric}, for a fixed mechanism, we have the notion of a mechanism-specific trade-off function $f_M$, which is evaluated using a tightly dominating pair. This trade-off curve is usually numerically intractable, so a numerical lower-bound is computed via \cref{alg:get-beta} using accountants described in \appref{app:profiles}, which we denote by $f_\text{acc}$. Note that $f_\text{acc}(\alpha) \leq f_M(\alpha)$ for all $\alpha \in[0,1]$, so the mechanism is $f_\text{acc}$-DP. From the discussion at the end of \appref{app:profiles}, we have that the error in these numerical lower-bounds is negligible, and so we ignore it in this work. We henceforth treat $f_\text{acc}(\alpha) = f_M(\alpha)$ for all $\alpha \in[0,1]$, and refer to this function as $f$ in the remainder of this subsection. 

Using the process outlined in \cref{sec:metric} and detailed in \appref{app:tech_details}, we find the tightest possible $\mu^*$-GDP bound such that $f_{\mu^*}(\alpha) \leq f(\alpha)$ for all $\alpha \in [0,1]$. Note that the mechanism is indeed $\mu^*$-GDP. We seek a metric for quantifying how far away $f_{\mu^*}$ is from $f$. Based on \citet{kaissis2024beyond}, consider the following metric: 
\begin{equation}
    \Delta  = \inf \{ \kappa \geq 0 ~| \: \:\forall \alpha \in [0,1] : f(\alpha + \kappa) - \kappa \leq f_{\mu^*}(\alpha)\}.
\end{equation}
That is, we have that $f_{\mu^*}(\alpha) \leq f(\alpha)$ for all $\alpha \in [0,1]$, and now we now seek the smallest shift $\kappa$ so that the sign is reversed for all $\alpha \in [0,1]$. This metric turns out to have strong decision theoretic interpretations. We provide two of them below. 

Consider the same binary hypothesis test between distributions $M(S)$ and $M(S')$ as in \appref{app:fdp}.
One can consider characterizing the hypothesis test via the \emph{minimal achievable error} of a Bayesian adversary with prior probability $\pi$ of the null hypothesis $H_0: \theta \sim M(S)$ being correct. For any fixed test $\phi$, the probability of error is simply $\pi \alpha_\phi + (1-\pi) \beta_\phi$. By considering the most powerful attack and the minimal achievable error, we get that:
\begin{equation}
    R_\text{min}(\pi) = \min_{\alpha} \pi \alpha + (1 - \pi) f(\alpha).
\end{equation}
With this setup, it is straightforward to express $\Delta$:
\begin{equation}
    \Delta = \max_{\pi \in [0,1]} R_\text{min}(\pi) - R^{\mu^*}_\text{min}(\pi).
\end{equation}
That is, $\Delta$ expresses the worst-case regret of
an analyst choosing to employ a $\mu^*$-GDP mechanism instead of the original mechanism $M$,
whereby regret is expressed in terms of the adversary’s decrease in minimum Bayes error. Choosing $\Delta < 10^{-2}$ implies that the decrease in the error of any adversarial attack changes by at most a percentage point when opting to use $\mu^*$ in place of $f$.

Next, we provide a proof of another operational interpretation using the notion of advantage of attacks from \cref{sec:proposed-framework}.

\advantage*
 
To show this, we use the following lemma.
\begin{lemma}[\citealp{kaissis2024beyond}]
    \label{stmt:pp-delta-to-adv}
    Let $f_{pp}(\alpha) = 1 - \alpha$ be the trade-off curve of a mechanism which achieves perfect privacy. For any valid trade-off curve $f$, we have:
    $\Delta^\leftrightarrow(f_{pp}, f) = \frac{1}{2} \eta(f)$.
\end{lemma}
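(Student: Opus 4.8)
The plan is to compute the two directed divergences in \cref{eq:delta-div} separately and then combine them via the symmetrization in \cref{eq:symm-delta-div}. I would first dispatch the easy direction $\Delta(f, f_{pp})$. Substituting $f_{pp}(\alpha) = 1 - \alpha$, its defining constraint becomes $f(\alpha + \kappa) - \kappa \leq 1 - \alpha$ for all $\alpha$. At $\kappa = 0$ this reads $f(\alpha) \leq 1 - \alpha$, which holds for every valid trade-off curve by definition. Hence $\Delta(f, f_{pp}) = 0$, and the symmetrized quantity collapses to $\Delta(f_{pp}, f)$.

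The core of the argument is therefore to show $\Delta(f_{pp}, f) = \tfrac{1}{2}\eta(f)$. Substituting $f_{pp}(\alpha + \kappa) = 1 - (\alpha + \kappa)$ into \cref{eq:delta-div}, the constraint $f_{pp}(\alpha + \kappa) - \kappa \leq f(\alpha)$ rearranges to $g(\alpha) \leq 2\kappa$, where $g(\alpha) \define 1 - \alpha - f(\alpha)$ is precisely the quantity whose maximum defines the advantage $\eta(f)$ in \cref{eq:advantage}. For the upper bound I would take $\kappa = \eta(f)/2$ and observe that $g(\alpha) \leq \eta(f) = 2\kappa$ for every $\alpha$, so this $\kappa$ is feasible, giving $\Delta(f_{pp}, f) \leq \eta(f)/2$.

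The matching lower bound is the only step needing care, because the constraint ranges over $\alpha \in [0,1]$ while the shifted argument $\alpha + \kappa$ may exceed $1$ (where one extends trade-off curves by $0$); on that portion of the domain the inequality degenerates to $-\kappa \leq f(\alpha)$, which is vacuous. So I must verify that a maximizer $\alpha^*$ of $g$ genuinely lies in the \emph{active} region $\alpha^* + \kappa \leq 1$. From $g(\alpha^*) = \eta(f)$ I get $\alpha^* = 1 - f(\alpha^*) - \eta(f) \leq 1 - \eta(f)$ since $f \geq 0$; hence for any $\kappa < \eta(f)/2$ we have $\alpha^* + \kappa < 1 - \eta(f)/2 \leq 1$, so $\alpha^*$ is active and $g(\alpha^*) = \eta(f) > 2\kappa$ violates feasibility. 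This forces $\Delta(f_{pp}, f) \geq \eta(f)/2$, matching the upper bound.

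Combining the two directions yields $\Delta^{\leftrightarrow}(f_{pp}, f) = \max\{\eta(f)/2, \, 0\} = \tfrac{1}{2}\eta(f)$, as claimed. I expect the domain bookkeeping in the lower bound---confirming that the advantage-maximizing $\alpha^*$ is not pushed past $1$ by the shift $\kappa$---to be the only place requiring genuine attention, since everything else is a direct substitution into the definitions of \cref{eq:delta-div} and \cref{eq:advantage}.
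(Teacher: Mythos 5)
Your argument is correct, but there is nothing in the paper to compare it against: the paper states this lemma with a citation to \citet{kaissis2024beyond} and uses it purely as a black box in its proof of \cref{stmt:delta-as-tv}; no proof of the lemma itself appears anywhere in the paper. On its own merits, your computation is sound and complete. The easy direction $\Delta(f, f_{pp}) = 0$ follows exactly as you say, since $f(\alpha) \leq 1 - \alpha$ makes $\kappa = 0$ feasible. For the main direction, the reduction of the constraint to $1 - \alpha - f(\alpha) \leq 2\kappa$ on the active region $\alpha + \kappa \leq 1$, the feasibility of $\kappa = \eta(f)/2$, and your observation that any maximizer satisfies $\alpha^* = 1 - f(\alpha^*) - \eta(f) \leq 1 - \eta(f)$ (so no $\kappa < \eta(f)/2$ can push it past $1$, rendering every such $\kappa$ infeasible) together pin down $\Delta(f_{pp}, f) = \eta(f)/2$; the degenerate case $\eta(f) = 0$ is vacuous for the lower bound and covered by the upper bound. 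Two conventions you invoke are consistent with the paper: trade-off curves are continuous by the characterization in \appref{app:fdp}, so the maximum in \cref{eq:advantage} is attained, and extending the curve by $0$ beyond $\alpha = 1$ is the standard reading of \cref{eq:delta-div}. Your domain bookkeeping---checking that the advantage-maximizing $\alpha^*$ remains in the active region rather than assuming the shifted constraint binds everywhere---is exactly the step a careless proof would botch, and you handled it correctly.
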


\begin{proof}
    The result follows from \cref{stmt:pp-delta-to-adv} by triangle inequality and symmetry of $\Delta^\leftrightarrow$:
    \[
    \begin{aligned}
        \eta(\tilde f) = 2 \Delta^\leftrightarrow(f_{pp}, \tilde f) &\leq 2 \Delta^\leftrightarrow(f_{pp}, f) + 2 \Delta^\leftrightarrow(f, \tilde f) \\
        &= \eta(f) + 2 \Delta^\leftrightarrow(f, f_\mu),
    \end{aligned}
    \]
    from which we have that $\eta(\tilde f) - \eta(f) \leq 2 \Delta^\leftrightarrow(f, \tilde f)$.
    Analogously, we have:
    \[
        \begin{aligned}
        \eta(f) = 2 \Delta^\leftrightarrow(f, f_{pp}) &\leq 2 \Delta^\leftrightarrow(f, \tilde f) + 2 \Delta^\leftrightarrow(\tilde f, f_{pp}) \\
        &= 2\Delta^\leftrightarrow(f, \tilde f) + \eta(\tilde f),
        \end{aligned}
    \]
    from which we have $\eta(f) - \eta(\tilde f) \leq 2\Delta^\leftrightarrow(f, \tilde f)$. Combining the two conclusions, we get the sought form.
\end{proof}
Thus, the values $\Delta < 10^{-2}$ ensure that the highest advantage of MIAs is pessimistically over-reported by at most $2$ percentage points. 

Additionally, we present empirical results in \appref{app:plots} that show that, on both standard and log-log scales, the $\mu^*$-GDP trade-off curve closely follow the original $f$ up to numeric precision for different instantiations of DP. We emphasize that $\Delta$ is \emph{not} analogous to $\delta$ from approximate DP. Whereas $\delta$ can be interpreted as a ``failure'' probability that the privacy loss is higher than $\varepsilon$,  no such interpretation holds for $\Delta$. Indeed,  $\Delta$ quantifies how close the lower bound $\mu^*$-GDP is to $f$. There is no failure probability: $\mu^*$ is always a valid bound on $f$. However, it may be a \emph{loose} lower bound $f_{\mu^*}$ on the trade-off curve $f$ (hence a loose upper bound on privacy loss). This looseness is what is captured by $\Delta$.

\section{Additional Proofs}
\label{app:proofs}

In this section, we provide the omitted proofs of statements in the main body.
\dptogdp*

\begin{proof}
We need to find a Gaussian mechanism which dominates the randomized response mechanism $M_\varepsilon$~\cite{kairouz2015composition}.
In turn, randomized response dominates any $\varepsilon$-DP mechanism, i.e., its trade-off curve is always lower than that of any other pure DP mechanism.
The total variation of the randomized response mechanism is given by the following expression~\cite{kairouz2015composition}:
\begin{equation}
    \sup_{S \simeq S'} \mathsf{TV}(M_\varepsilon(S), M_\varepsilon(S')) = \frac{e^\varepsilon - 1}{e^\varepsilon + 1},
\end{equation}
where $\mathsf{TV}(P, Q) = H_1(P \mid Q) = \sup_{E \subseteq \Theta} P(E) - Q(E)$. For a $\mu$-GDP mechanism $M_{\mu}$, we have the following~\cite{dong2022gaussian}:
\begin{align}
    \sup_{S \simeq S'} \mathsf{TV}(M_{\mu}(S), M_{\mu}(S')) &= \Phi\left(\frac{\mu}{2}\right) - \Phi\left(-\frac{\mu}{2}\right) \\
    &= 2 \Phi\left(\frac{\mu}{2}\right) - 1
\end{align}
To ensure that $\mathsf{TV}(M_{\mu}(S), M_{\mu}(S')) = \mathsf{TV}(M_\varepsilon(S), M_\varepsilon(S'))$, it suffices to set the parameter $\mu$ as follows:
\begin{align}
    \mu^* &= 2 \Phi^{-1}\left(\frac{e^\varepsilon}{e^\varepsilon + 1}\right) \\
    &= 2 \Phi^{-1}\left(1 - \frac{1}{e^\varepsilon + 1}\right) \\
    &= -2 \Phi^{-1}\left(\frac{1}{e^\varepsilon + 1}\right)
\end{align}
Observe that both the trade-off curve of the Gaussian mechanism $T(M_{\mu^*}(S), M_{\mu^*}(S'))$ and of the randomized response $T(M_\varepsilon(S), M_\varepsilon(S'))$ pass through the points:
\[
    (1, 0), \left(\frac{1}{e^\varepsilon + 1}, \frac{1}{e^\varepsilon + 1}\right), (1, 0).
\]
As the trade-off curve of the randomized response is piecewise linear between the points above, and as the trade-off curve of the Gaussian mechanism is convex, we have that:
\begin{equation}
    T(M_\varepsilon(S), M_\varepsilon(S')) \geq
    T(M_{\mu^*}(S), M_{\mu^*}(S')).
\end{equation}
\end{proof}

\adptogdp*

\begin{proof}
We have $f_\mu(0)=1$ for any finite $\mu \geq 0$. However, $f_{\varepsilon,\delta}(0) = 1 - \delta$, hence it is impossible to choose finite $\mu$ such that $f_\mu(\alpha) \leq f_{\varepsilon,\delta}(\alpha)$ for all $\alpha \in [0, 1]$. 
\end{proof}

\section{Additional Plots}\label{app:plots}
In this section, we provide additional visualizations.

First, we show the trade-off curve plots implied by the first row of \cref{tab:de_et_al}. In particular, the privacy parameters used in the first row are given in Table 14 of \cite{de2022unlockinghighaccuracydifferentiallyprivate}. We reproduce it in \cref{app:tab_cifar}.

Second, we show the trade-off curve for each of four DP-SGD mechanisms ($\varepsilon = 1, 2, 4, 8$) in \cref{app:fig_1} as outlined in \cref{app:tab_cifar}. Since the two curves (the $\mu$-GDP trade-off curve and the original trade-off curve) are difficult to distinguish, we also plot the difference between the two plots. We also note that the maximal difference between the two curves goes roughly like $2 \Delta$. In fact, the max difference is bounded above by $2 \Delta$ for all plots except when $\sigma = 9.4$ (red). We repeat this also for the third row in \cref{tab:de_et_al} using the privacy parameters from Table 17 from \citet{de2022unlockinghighaccuracydifferentiallyprivate}, which we copy here as \cref{app:tab_imm} and report in \cref{app:fig2}.

\begin{table}[h]
    \centering
    \caption{Hyper-parameters for training without extra data on CIFAR-10 
             with a WRN-40-4.}\label{app:tab_cifar}
    \begin{tabular}{l c c c c c c}
        \hline
        \textbf{Hyper-parameter} & 1.0 & 2.0 & 3.0 & 4.0 & 6.0 & 8.0 \\ 
        \hline
        $\varepsilon$ & 1.0 & 2.0 & 3.0 & 4.0 & 6.0 & 8.0 \\ 
        $\delta$ & $10^{-5}$ & $10^{-5}$ & $10^{-5}$ & $10^{-5}$ & $10^{-5}$ & $10^{-5}$ \\ 
        \hline
        Augmult & 32 & 32 & 32 & 32 & 32 & 32 \\ 
        Batch-size & 16384 & 16384 & 16384 & 16384 & 16384 & 16384 \\ 
        Clipping-norm & 1 & 1 & 1 & 1 & 1 & 1 \\ 
        Learning-rate & 2 & 2 & 2 & 2 & 2 & 2 \\ 
        Noise multiplier $\sigma$ & 40.0 & 24.0 & 20.0 & 16.0 & 12.0 & 9.4 \\ 
        Number Updates & 906 & 1156 & 1656 & 1765 & 2007 & 2000 \\ 
        \hline
    \end{tabular}
\end{table}

\begin{figure*}[h]
    \centering
    \begin{minipage}{0.48\linewidth}
        \centering
        \includegraphics[width=\linewidth]{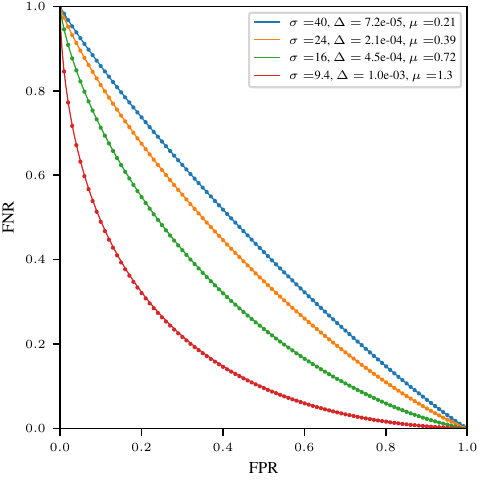}
    \end{minipage}%
    \hfill
    \begin{minipage}{0.48\linewidth}
        \centering

        \includegraphics[width=\linewidth]{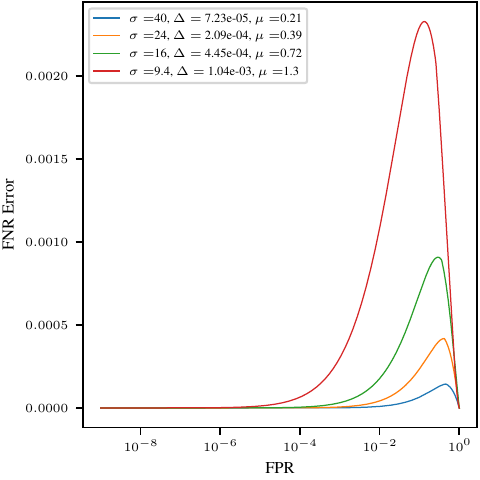}
    \end{minipage}
     \caption{Trade-off curves from the first row of \Cref{tab:de_et_al}. The dotted line refers to the $\mu$-GDP trade-off curve, and the solid line refers to the original trade-off curve from a numerical accountant. The right figure represents the difference between the dotted line and the solid lines in the left hand figure, on a logarithmic $x$ scale to emphasize small FPR.}
    \label{app:fig_1}
\end{figure*}

\begin{table}[h]
    \centering
    \caption{Hyper-parameters for ImageNet-32 $\rightarrow$ CIFAR-10, fine-tuning 
             the last layer of WRN-28-10}
    \label{app:tab_imm}
    \begin{tabular}{l c c c c}
        \hline
        \textbf{Hyper-parameter} & 1.0 & 2.0 & 4.0 & 8.0 \\
        \hline
        $\varepsilon$ & 1.0 & 2.0 & 4.0 & 8.0 \\
        $\delta$ & $10^{-5}$ & $10^{-5}$ & $10^{-5}$ & $10^{-5}$ \\
        \hline
        Augmentation multiplicity & 16 & 16 & 16 & 16 \\
        Batch-size & 16384 & 16384 & 16384 & 16384 \\
        Clipping-norm & 1 & 1 & 1 & 1 \\
        Learning-rate & 4 & 4 & 4 & 4 \\
        Noise multiplier $\sigma$ & 21.1 & 15.8 & 12.0 & 9.4 \\
        Number of updates & 250 & 500 & 1000 & 2000 \\
        \hline
    \end{tabular}
\end{table}

\begin{figure*}[h]
    \centering
    \begin{minipage}{0.48\linewidth}
        \centering
        \includegraphics[width=\linewidth]{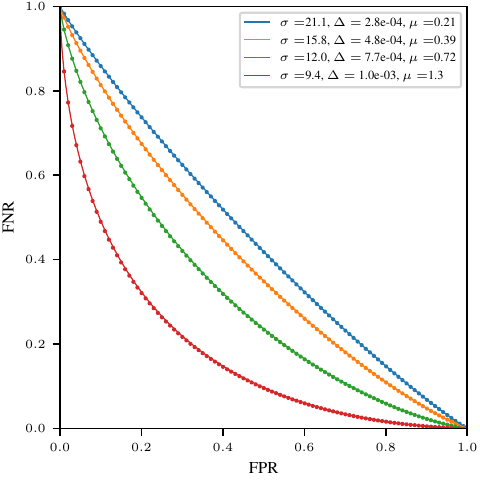}
    \end{minipage}%
    \hfill
    \begin{minipage}{0.48\linewidth}
        \centering

        \includegraphics[width=\linewidth]{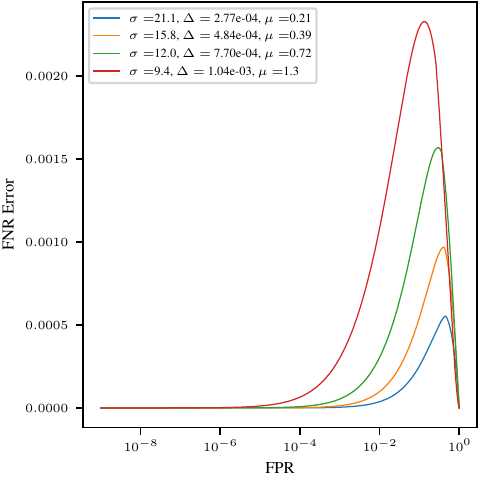}
    \end{minipage}
     \caption{Trade-off curves from the third row of \Cref{tab:de_et_al}. See the caption of \cref{app:fig_1} for details.}
    \label{app:fig2}
\end{figure*}

\end{document}